\definecolor{mydarkblue}{rgb}{0,0.08,0.45}
\definecolor{grapePurple}{HTML}{6A4C93}
\definecolor{multBlue}{HTML}{1982C4}
\definecolor{addRed}{HTML}{FF595E}
\title{Group Representational Position Encoding}
\author{
\bf Yifan Zhang$^{1}$~~~~Zixiang Chen\thanks{Core contribution;~~$^\dagger$Corresponding authors.}$~~^{2}$~~~~Yifeng Liu\footnotemark[1]~~$^{2}$~~~~Zhen Qin\footnotemark[1]~~~~~~Huizhuo Yuan$^{2}$\\[0.5mm]
\bf Kangping Xu$^{3}$~~~~Yang Yuan~~~~~Quanquan Gu$^{2}$$^{\dagger}$~~~~Andrew Chi-Chih Yao$^{3}$$^{\dagger}$ \\[1.5mm]
$^1$Princeton University~~~~
$^2$University of California, Los Angeles~~~~\\[0.5mm]
$^3$IIIS, Tsinghua University\\[1.5mm]
\texttt{yifzhang@princeton.edu}~~~~\texttt{qgu@cs.ucla.edu}\\
\texttt{andrewcyao@tsinghua.edu.cn}
}
\DeclareMathOperator{\SO}{SO}
\DeclareMathOperator{\blockdiag}{blockdiag}
\DeclareMathOperator{\softplus}{softplus}
\begin{document}

\maketitle

\begin{abstract}
We present \textbf{GRAPE} (\textbf{G}roup \textbf{R}epresent\textbf{A}tional \textbf{P}osition \textbf{E}ncoding), a unified framework for positional encoding based on group actions. GRAPE brings together two families of mechanisms: (i) \emph{multiplicative} rotations (Multiplicative GRAPE) in $\SO(d)$ and (ii) \emph{additive} logit biases (Additive GRAPE) arising from unipotent actions in the general linear group $\mathrm{GL}$.
In Multiplicative GRAPE, a position $n\in\mathbb{Z}$ (or $t\in\mathbb{R}$) acts as $\Gb(n)=\exp(n\,\omega\,\Lb)$ with a rank‑2 skew generator $\Lb \in \RR^{d \times d}$, yielding a relative, compositional, norm‑preserving map with a closed‑form matrix exponential. RoPE is recovered exactly when the $d/2$ planes are the canonical coordinate pairs with log‑uniform spectrum. Learned commuting subspaces and compact non‑commuting mixtures strictly extend this geometry to capture cross-subspace feature coupling at $O(d)$ and $O(rd)$ cost per head, respectively.
In Additive GRAPE, additive logits arise as rank‑1 (or low‑rank) unipotent actions, recovering ALiBi and the Forgetting Transformer (FoX) as exact special cases while preserving an exact relative law and streaming cacheability. Altogether, GRAPE supplies a principled design space for positional geometry in long‑context models, subsuming RoPE and ALiBi as special cases. Project Page: \href{https://github.com/model-architectures/GRAPE}{https://github.com/model-architectures/GRAPE}.
\end{abstract}

\section{Introduction}
Positional information is essential for sequence modeling with Transformers~\citep{vaswani2017attention}, whose self‑attention is otherwise permutation‑invariant. Early work injected absolute positional codes (sinusoidal or learned) into token representations~\citep{vaswani2017attention}. Later, relative encodings depending on offsets~\citep{shaw2018self} and linear logit biases such as ALiBi~\citep{press2021train} were introduced, the latter offering strong length extrapolation with negligible overhead.

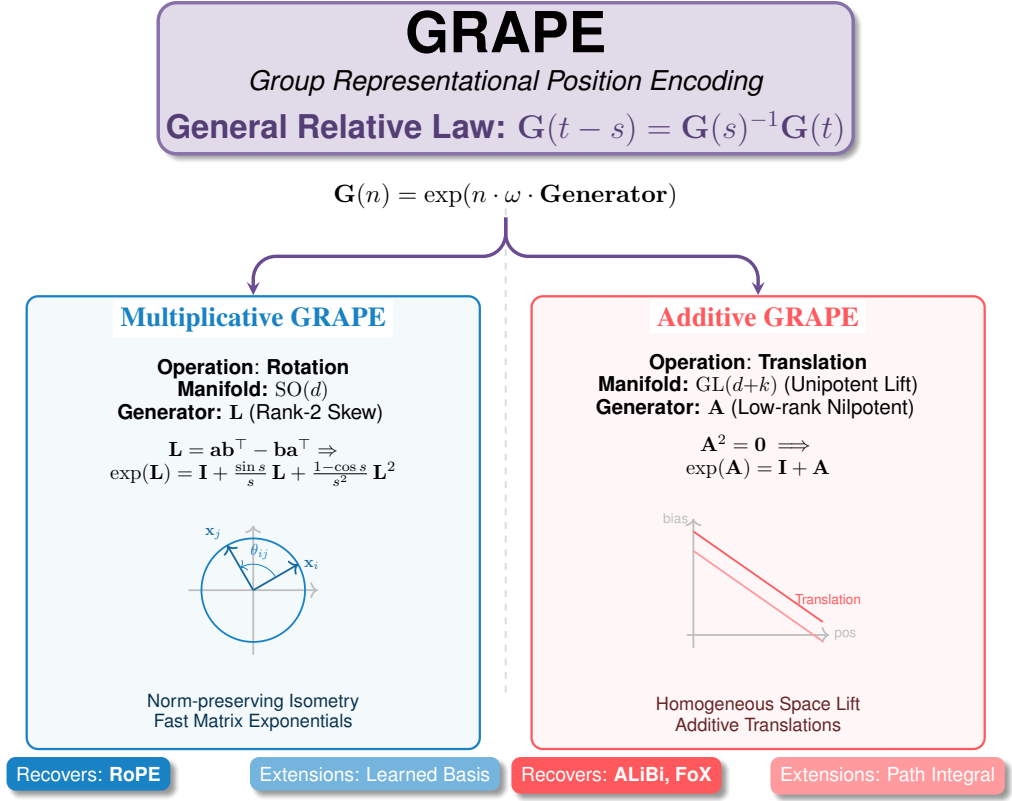
\begin{figure}[t!]
\centering
\resizebox{0.95\linewidth}{!}{%
\begin{tikzpicture}[
font=\sffamily,
mainbox/.style={
rectangle,
rounded corners=8pt,
minimum width=4cm,
minimum height=1.2cm,
align=center,
draw=grapePurple!80,
fill=grapePurple!10,
line width=1.5pt,
blur shadow={shadow blur steps=5}
},
subbox/.style={
rectangle,
rounded corners=5pt,
minimum width=7cm,
minimum height=7.0cm, 
align=center,
draw=gray!30,
fill=white,
line width=1pt,
anchor=north
},
titlelabel/.style={
fill=white,
inner sep=3pt,
text=black,
font=\bfseries\large
},
connector/.style={
-stealth,
line width=1.5pt,
rounded corners=10pt,
grapePurple
}
]

\definecolor{grapePurple}{HTML}{6A4C93}
\definecolor{multBlue}{HTML}{1982C4}
\definecolor{addRed}{HTML}{FF595E}

\node[mainbox, minimum width=8cm, fill=grapePurple!20] (center) at (0,0) {
{\Huge \textbf{GRAPE}} \\[2mm]\large
\textit{Group Representational Position Encoding} \\[2mm]
\textcolor{grapePurple!80!black}{\Large \textbf{General Relative Law:} $\mathbf{G}(t-s) = \mathbf{G}(s)^{-1}\mathbf{G}(t)$}
};

\node[below=0.3cm of center, scale=1.1] (formula) {
$\mathbf{G}(n) = \exp(n \cdot \omega \cdot \mathbf{Generator})$
};

\node[subbox, draw=multBlue, fill=multBlue!5] (mult) at ([xshift=-3.9cm, yshift=-1.2cm]formula.south) {};

\node[titlelabel, text=multBlue, below=0.2em of mult.north] (multTitle) {Multiplicative GRAPE};

\node[below=0.2cm of multTitle, align=center, scale=0.9] (multMath) {
\textbf{Operation}: \textbf{Rotation}\\
\textbf{Manifold:} $\mathrm{SO}(d)$\\
\textbf{Generator:} $\Lb$ (Rank-2 Skew) \vspace{1ex}\\
$\Lb = \mathbf{a}\mathbf{b}^\top - \mathbf{b}\mathbf{a}^\top \Rightarrow$\\
$\exp(\Lb) = \mathbf{I} + \frac{\sin s}{s}\, \Lb + \frac{1-\cos s}{s^2}\, \Lb^2$
};

\begin{scope}[shift={($(multMath.south)+(0,-1.5)$)}]
\draw[->, thick, gray!50] (-1,0) -- (1,0);
\draw[->, thick, gray!50] (0,-1) -- (0,1);
\draw[multBlue, thick] (0,0) circle (0.8cm);
\draw[->, thick, multBlue!80!black] (0,0) -- (30:0.8cm) node[right, scale=0.7] {$\mathbf{x}_i$};
\draw[->, thick, multBlue!80!black] (0,0) -- (120:0.8cm) node[above left, scale=0.7] {$\mathbf{x}_j$};
\draw[multBlue, ->] (30:0.4) arc (30:120:0.4) node[midway, above, scale=0.7] {$\theta_{ij}$};
\end{scope}

\node[above=0.2cm of mult.south, scale=0.8, align=center, text=multBlue!40!black] {Norm-preserving Isometry\\Fast Matrix Exponentials};

\node[subbox, draw=addRed, fill=addRed!5] (add) at ([xshift=3.9cm, yshift=-1.2cm]formula.south) {};

\node[titlelabel, text=addRed, below=0.2em of add.north] (addTitle) {Additive GRAPE};

\node[below=0.2cm of addTitle, align=center, scale=0.9] (addMath) {
\textbf{Operation}: \textbf{Translation}\\
\textbf{Manifold:} $\mathrm{GL}(d{+}k)$ (Unipotent Lift) \\
\textbf{Generator:} $\Ab$ (Low-rank Nilpotent) \vspace{1ex}\\
$\Ab^2 = \mathbf{0} \implies$\\
$\exp(\Ab) = \mathbf{I} + \Ab$
};

\begin{scope}[shift={($(addMath.south)+(0,-1.5)$)}]
\draw[->, thick, gray!50] (-1.1,-0.8) -- (1.1,-0.8) node[right, scale=0.6] {pos};
\draw[->, thick, gray!50] (-1,-1) -- (-1,1) node[left, scale=0.6] {bias};
\draw[addRed, thick] (-1,0.8) -- (1,-0.6);
\draw[addRed!60, thick] (-1,0.5) -- (1,-0.9);
\node[right, scale=0.6, addRed] at (0.5,-0.25) {Translation};
\end{scope}

\node[above=0.2cm of add.south, scale=0.8, align=center, text=addRed!40!black] {Homogeneous Space Lift\\Additive Translations};

\node[below right=0.1cm and -0.3cm of mult.south west, anchor=north west, fill=multBlue, text=white, rounded corners, inner sep=5pt, scale=0.85, blur shadow] (rope) {Recovers: \textbf{RoPE}};

\node[below left=0.1cm and -0.3cm of mult.south east, anchor=north east, fill=multBlue!60, text=white, rounded corners, inner sep=5pt, scale=0.85, blur shadow] (liere) {Extensions: Learned Basis};

\node[below right=0.1cm and -0.3cm of add.south west, anchor=north west, fill=addRed, text=white, rounded corners, inner sep=5pt, scale=0.85, blur shadow] (alibi) {Recovers: \textbf{ALiBi, FoX}};

\node[below left=0.1cm and -0.3cm of add.south east, anchor=north east, fill=addRed!60, text=white, rounded corners, inner sep=5pt, scale=0.85, blur shadow] (fox) {Extensions: Path Integral};

\draw[connector] (formula.south) -- +(0,-0.6) -| (mult.north);
\draw[connector] (formula.south) -- +(0,-0.6) -| (add.north);

\path let \p1 = (rope.south) in coordinate (lowest) at (0,\y1);
\node[below=0.6cm of lowest, scale=0.9, text=gray!80!black, align=center, font=\itshape] (pi) {\large
Also extends to \textbf{contextual forms} via state-dependent generators
};

\begin{pgfonlayer}{background}
\draw[dashed, gray!30, thick] (0, -1.8) -- (0, -9.5);
\end{pgfonlayer}

\end{tikzpicture}%
}
\vspace{-1ex}
\caption{\textbf{Overview of the GRAPE Framework.} We unify positional encodings via group actions $\mathbf{G}(n)=\exp(n\omega\mathbf{L})$. \textbf{Left:} Multiplicative GRAPE recovers RoPE via rank-2 skew generators in $\mathrm{SO}(d)$. \textbf{Right:} Additive GRAPE recovers ALiBi and FoX via low-rank nilpotent generators in the unipotent subgroup of $\mathrm{GL}(d+k)$ ($k = 1$ or $2$).}
\label{fig:grape_overview}
\end{figure}

Rotary Position Embedding (RoPE)~\citep{su2024roformer} realizes relative positions as orthogonal planar rotations of queries and keys, preserving norms and yielding exact origin invariance of attention scores. Despite its appeal, RoPE fixes coordinate planes and typically a log‑uniform spectrum, limiting cross‑subspace coupling and contextual warping of phase. More broadly, absolute codes break translation equivariance; table‑based relatives add window‑dependent overhead. A new formulation is needed because current methods isolate the essential properties of stability, monotonic distance penalty, and expressivity. These observations motivate a unified formulation that (i) preserves RoPE’s orthogonality and exact relativity when desired, (ii) \emph{also} covers additive/forgetting mechanisms such as ALiBi~\citep{press2021train} and Forgetting Transformer (FoX)~\citep{lin2025forgetting}, and (iii) admits learned and contextual generalizations with clean streaming.

We therefore propose \textbf{G}roup \textbf{R}epresent\textbf{A}tional \textbf{P}osition \textbf{E}ncoding (\textbf{GRAPE}), a group‑theoretic framework that unifies two complementary families of positional mechanisms (see Figure~\ref{fig:grape_overview} for an overview). The multiplicative family (Multiplicative GRAPE) models positions as norm‑preserving rotations in $\SO(d)$ acting on $(\qb,\kb)$; the additive family (Additive GRAPE/Path-Integral Additive GRAPE) models positions as unipotent actions in the general linear group $\mathrm{GL}$ that yield linear‑in‑offset logit biases (including content‑gated and path‑integral forms). This perspective recovers RoPE and ALiBi as exact special cases, proves that FoX is an exact instance of Additive GRAPE, and supplies principled, streaming‑friendly contextual extensions on both sides.

\noindent
Concretely:
\emph{(a)} Multiplicative GRAPE (GRAPE-M) encodes $n\in\mathbb{Z}$ (or $t\in\mathbb{R}$) as an element of $\SO(d)$ via a rank‑2 skew generator; and
\emph{(b)} Additive GRAPE (GRAPE-A) and Path‑Integral Additive GRAPE (GRAPE-AP) lifts to the general linear group $\mathrm{GL}$ using homogeneous coordinates to produce linear‑in‑offset logit biases (recovering ALiBi and FoX).

For Multiplicative GRAPE, positions are mapped as
\begin{align*}
\Gb(n)=\exp\big(n\,\omega\, \Lb\big)\in\SO(d), \qquad \Lb=\ab \bbb^\top-\bbb\ab^\top\in\mathfrak{so}(d),
\end{align*}
where $\ab,\bbb\in\mathbb{R}^d$ define a rank‑2 skew generator $\Lb$ and $\omega>0$ is a frequency. The action is an isometry, and $\Gb(n+m)=\Gb(n)\Gb(m)$ guarantees exact origin invariance of attention logits. We derive a closed‑form Rodrigues‑type formula~\citep{rodrigues1840des, hall2013lie}, enabling fast linear-time application with stable derivatives and no explicit matrix materialization. RoPE is recovered when $d/2$ commuting rank‑2 generators act on disjoint coordinate planes with prescribed frequencies. 

For Additive GRAPE, positions are mapped via the matrix exponential $\Gb_{\mathrm{add}}(n)=\exp(n\omega
\Ab)=
\Ib+n\omega
\Ab$ in a lifted homogeneous space. Here, the generator 
$\Ab \in \mathfrak{gl}(d+1)$ is a nilpotent matrix of rank one. While this additive transformation is not an isometry, it preserves the exact relative law, ensuring attention scores depend only on position offsets. This formulation provides a rigorous group-theoretic foundation for additive biases, recovering ALiBi and FoX as exact instances.

Our contributions are highlighted as follows:
\begin{enumerate}[leftmargin=*, topsep=0pt, itemsep=1pt, parsep=1pt]
\item We propose \textbf{GRAPE} as a unified group‑theoretic view that subsumes \emph{multiplicative} orthogonal rotations in $\SO(d)$ and \emph{additive} unipotent (all eigenvalues equal to $1$) mechanisms in general linear group $\mathrm{GL}$, recovering RoPE and ALiBi as exact special cases and proving FoX is an exact instance (Appendix~\ref{app:fox_as_grape_add}).
\item \textbf{Multiplicative GRAPE.} We derive a closed‑form rank‑2 matrix exponential with fast application and stable differentiation; we show RoPE is a special multiplicative GRAPE in a possibly learned orthogonal basis.
\item \textbf{Additive GRAPE.} We show that linear‑in‑offset logit biases arise from rank‑1 (or low‑rank) unipotent actions in the general linear group $\mathrm{GL}$ with an exact relative law and streaming cacheability. This includes query‑ or key‑gated slopes, a commuting dictionary of additive components, and exact recoveries of ALiBi and FoX in closed form (Sections~\ref{sec:grape_additive}, \ref{sec:alibi_unipotent}, Appendix~\ref{app:fox_as_grape_add}). We also formalize path‑integral additive biases that remain causal and support efficient training. (Section~\ref{sec:grape_add_pi}). 
\end{enumerate}

\section{Multiplicative Group Representational Position Encoding}
\label{sec:grape-core}

\noindent We propose the \textbf{Multiplicative GRAPE}, as a Lie‑group positional map with a closed‑form rank‑2 matrix exponential, an exact relative law, and a streaming/cache methodology. The core intuition is to encode position as a norm‑preserving rotation in the special orthogonal group $\SO(d)$ \footnote{Definitions of $\SO(d)$ and other mathematical terms are postponed to Table~\ref{tab:notation_summary} in the Appendix.}\citep{hall2013lie}. A single skew‑symmetric generator $\Lb\in\mathfrak{so}(d)$ produces the entire family of rotations via the matrix exponential. We begin with notation and the rank‑2 generator.

\subsection{Preliminaries and Rank-2 Generator} \label{subsec:notation}
The generator $\Lb$ is formally defined as an element of the corresponding Lie algebra, $\mathfrak{so}(d)$. Let $\mathfrak{so}(d)=\{\Lb\in\mathbb{R}^{d\times d}: \Lb^\top = -\Lb\}$ denote the Lie algebra of $\SO(d)$. The simplest non-trivial generator defines a rotation within a single 2D plane. We construct such a rank-2 generator from two vectors, $\ab$ and $\bbb$, that span this plane of action. For $\ab, \bbb\in\mathbb{R}^d$, define the rank-2 generator $\Lb \equiv \Lb(\ab, \bbb)$ as
\begin{align}
\Lb(\ab, \bbb) = \ab \bbb^\top - \bbb \ab^\top,
\alpha=\|\ab\|^2,\ \beta=\|\bbb\|^2,\ \gamma = \ab^\top \bbb,
\Delta=\alpha\beta-\gamma^2\ge 0,\ s=\sqrt{\Delta} \label{eq:rank2generator}.
\end{align}

\noindent\textbf{Rank-2 structure.} Let $\mathcal{U}=\mathrm{span}\{\ab,\bbb\}$. The rank‑2 generator $\Lb$ has a useful geometric property: applying it twice projects onto the action plane $\mathcal{U}$ and scales. A direct calculation shows
\begin{align*}
\Lb^2 = -\,s^2\,\mathbf{P}_{\mathcal{U}},
\end{align*}
where $\mathbf{P}_{\mathcal{U}}$ is the orthogonal projector to the space $\mathcal{U}$. Hence spectrum of $\Lb$ (the set of its eigenvalues), denoted $\sigma(\Lb)$, is $\{\pm i s,0,\ldots,0\}$ and the minimal polynomial is $\lambda(\lambda^2+s^2)$. A detailed derivation is given in Appendix~\ref{app:rank2_details}.

\noindent\textbf{Initialization.}
Write $\Ab\triangleq[\ab\ \bbb]\in\mathbb{R}^{d\times 2}$ and $\Jb = \begin{psmallmatrix}0&-1\\[0.2ex]1&0\end{psmallmatrix}$ so that $\Lb = \Ab \Jb \Ab^\top$.
For any $\Mb\in \mathrm{SL}(2)$ (the $2\times 2$ real matrices with determinant $1$, see Table~\ref{tab:notation_summary}), $\Mb \Jb \Mb^\top = \Jb$ and thus $\Ab \mapsto \Ab \Mb$ leaves $\Lb$ invariant; for general $\Mb\in \mathrm{GL}(2)$ the group of invertible $2\times 2$ matrices), $\Lb$ scales by $\det(\Mb)$.
Therefore the oriented plane $\mathcal U=\mathrm{span}\{\ab, \bbb\}$ and the scalar $s=\sqrt{\alpha\beta-\gamma^2}$ determine the action.
We fix a gauge at initialization by $\|\ab\|=\|\bbb\|=1$ and $\ab^\top \bbb = 0$ (absorbing scale into $\omega$).

\noindent\textbf{Canonical $90^\circ$ rotation operator.}
Fix a block‑diagonal complex structure $\mathcal{J}\in\mathfrak{so}(d)$ with $\mathcal{J}^\top=-\mathcal{J}$ and $\mathcal{J}^2=-\mathbf{I}$ (for odd $d$, act on the top‑left $2\lfloor d/2\rfloor$ coordinates and leave the final coordinate unchanged). Concretely, $\mathcal{J}=\bigoplus_{i=1}^{\lfloor d/2\rfloor}\begin{psmallmatrix}0&-1\\[0.2ex]1&0\end{psmallmatrix}$. For any $\ab\in\mathbb{R}^d$, write $\ab_\perp := \mathcal{J} \ab$, which equals “$\ab$ rotated by $90^\circ$” within the canonical $2$D blocks and satisfies $\ab^\top \ab_\perp=0$ and $\|\ab_\perp\|=\|\ab\|$.

\subsection{Exact relative law}
\label{sec:foundations}
\noindent
For a fixed $\Lb\in\mathfrak{so}(d)$, define $\Gb(n)=\exp(n \Lb)\in\SO(d)$, which forms a one‑parameter subgroup. The exact relative law property for positional encoding implies:
\begin{align*}
\Gb(t{-}s) = \Gb(s)^\top \Gb(t),\qquad \Gb(n)^\top \Gb(n) = \mathbf{I}.
\end{align*}
Here $\Gb(n)\in \SO(d)$, so the transpose coincides with the group inverse, $\Gb(n)^\top=\Gb(n)^{-1}$; the identity above is exactly the \textbf{relative-position law for a one-parameter subgroup}. A concise summary of $\SO(d)$, $\mathrm{GL}(d)$ and $\mathrm{SL}(d)$ is collected in Table~\ref{tab:notation_summary}.
This algebraic property enables relative positional encoding: interactions depend only on offsets.
\begin{align*}
\Gb(n)=\exp(n\omega \Lb), \quad \Gb(n+m)=\Gb(n)\Gb(m), \quad \Gb(0)= \mathbf{I}, \quad \text{and} \quad \Gb(-n)=\Gb(n)^\top.
\end{align*}
Crucially, this exact relative property relies solely on the one-parameter subgroup structure ($G(n+m)=G(n)G(m)$), holding true regardless of whether the generator implies commuting or coupled non-commuting subspaces.

\subsection{Closed‑form fast matrix exponential}
\label{sec:grape-method}

\noindent
Based on the minimal polynomial mentioned in Section~\ref{subsec:notation}, the exponential map $\exp(\Lb)$ for a rank‑2 generator can be expressed as a quadratic in $\Lb$. This yields a convenient closed‑form solution, often referred to as a Rodrigues‑type formula~\citep{rodrigues1840des, hall2013lie}:
\begin{align*}
\exp(\Lb) = \mathbf{I} + \frac{\sin s}{s}\, \Lb + \frac{1-\cos s}{s^2}\, \Lb^2.
\end{align*}
Geometrically, the formula is best understood via $\Lb^2$ as a projector onto $\mathcal{U}$. Since $\Lb^2=-s^2 \mathbf{P}_{\mathcal{U}}$, the exponential can be written as
\[
\exp(\Lb) = \mathbf{I}-(1-\cos s)\, \mathbf{P}_{\mathcal{U}} + \frac{\sin s}{s}\, \Lb,
\]
which reveals its action explicitly: it is a rotation by angle $s$ within the plane $\mathcal{U}=\mathrm{span}\{\ab,\bbb\}$ and the identity on the orthogonal complement $\mathcal{U}^\perp$. The vectors $\ab$ and $\bbb$ thus define the plane of action for the positional rotation.

\noindent\textbf{Cost of application.}
For a single rank‑2 plane, computing $\mathbf{y} = \Gb(n)\mathbf{x}$ requires two inner products $\mathbf{u} = \langle \ab, \mathbf{x}\rangle$, $\mathbf{v} = \langle \bbb, \mathbf{x}\rangle$, followed by
$\mathbf{y} = \mathbf{x} + f_1(n) (\ab v - \bbb u) + f_2(n)\left[\gamma(\ab v + \bbb u)-\beta \ab u-\alpha \bbb v\right]$, where $(\alpha,\beta,\gamma)$ are plane scalars and $f_{1,2}$ are trigonometric scalars (with series guards as $s\to0$). This is $O(d)$ flops with a small constant and no materialization of $\Gb(n)$; derivative expressions are in Appendix~\ref{app:rank2_details}.

\subsection{The \texorpdfstring{$\bbb=\mathcal{J}\ab$}{b=Ja} constraint}\label{subsec:b_eq_Ja}
We now consider an important special case by setting $\bbb = \mathcal{J}\ab$. This constraint, which makes the plane vectors $\ab$ and $\bbb$ orthogonal and equal in norm, significantly simplifies the generator's structure and reveals a direct connection to the canonical RoPE formulation. With this constraint, the scalars simplify: $\gamma = \ab^\top \bbb = \ab^\top\mathcal{J} \ab = 0$, $\beta = \|\bbb\|^2 = \|\ab\|^2 = \alpha$, and hence $s = \sqrt{\alpha\beta-\gamma^2} = \alpha$.
Moreover, on the $2$D subspace $\mathcal{U}=\mathrm{span}\{\ab, \mathcal{J}\ab\}$ one has
\begin{align*}
\Lb(\ab, \mathcal{J} \ab) \ab =-(\mathcal{J}\ab) \alpha,\qquad \Lb(\ab,\mathcal{J}\ab)\,\mathcal{J}\ab = \alpha\,\ab,
\end{align*}
so $\Lb(\ab,\mathcal{J}\ab)|_{\mathcal{U}}=-\,\alpha\,\mathcal{J}|_{\mathcal{U}}$ and $\Lb(\ab, \mathcal{J} \ab)|_{\mathcal{U}^\perp}=0$.
Therefore
\begin{align*}
\exp\big(n\omega \Lb(\ab,\mathcal{J}\ab)\big)
= \mathbf{I} - \big(1-\cos(n\omega\alpha)\big)\mathbf{P}_{\mathcal{U}} - \sin(n\omega\alpha)\,\mathcal{J}\mathbf{P}_{\mathcal{U}},
\end{align*}
This expression follows by substituting $\Lb|_{\mathcal{U}}=-\alpha\,\mathcal{J}|_{\mathcal{U}}$ and $\Lb^2=-\alpha^2\mathbf{P}_{\mathcal{U}}$ into the Rodrigues formula $\exp(n\omega\Lb)=\mathbf{I}+\frac{\sin(n\omega s)}{s}\Lb+\frac{1-\cos(n\omega s)}{s^2}\Lb^2$ with $s=\alpha$; see Appendix~\ref{app:rank2_details} for the algebraic steps.
It is a pure planar rotation by angle $n\omega\alpha$ on $\mathcal{U}$ and the identity on $\mathcal{U}^\perp$.
\begin{corollary}[Frequency–norm coupling]
If $\|\ab\|=1$, the rotation angle reduces to $n\omega$. Without normalization, the effective frequency is $\omega_{\mathrm{eff}}=\omega\|\ab\|^2$, so the scale of $a$ can be absorbed into~$\omega$.
\end{corollary}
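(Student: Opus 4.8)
The plan is to read the rotation angle directly off the closed form derived just above under the constraint $\bbb=\mathcal{J}\ab$. That display exhibits $\exp\big(n\omega\,\Lb(\ab,\mathcal{J}\ab)\big)$ as a pure planar rotation on $\mathcal{U}$ with angle $n\omega s$, while the preceding scalar computation fixed $\gamma=0$, $\beta=\alpha$, and hence $s=\sqrt{\alpha\beta-\gamma^2}=\alpha=\|\ab\|^2$. Thus the rotation angle equals $n\omega\alpha=n\omega\|\ab\|^2$, and the first claim is immediate: substituting the normalization $\|\ab\|=1$ gives $\alpha=1$, so the angle collapses to $n\omega$.

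For the second claim I would set $\omega_{\mathrm{eff}}:=\omega\alpha=\omega\|\ab\|^2$ and observe that the whole map depends on the pair $(\omega,\ab)$ only through two pieces: the projector $\mathbf{P}_{\mathcal{U}}$, which is a function of the \emph{plane} $\mathcal{U}=\mathrm{span}\{\ab,\mathcal{J}\ab\}$ and is unchanged by rescaling $\ab$; and the angle $n\omega\alpha$, which depends on the product $\omega\|\ab\|^2$. Consequently the rescaling $\ab\mapsto c\ab$ together with the compensating change $\omega\mapsto\omega/c^2$ leaves $n\mapsto\Gb(n)$ pointwise unchanged, which is the precise meaning of ``the scale of $\ab$ can be absorbed into $\omega$.'' I would cross-check this against the Initialization paragraph: under $\bbb=\mathcal{J}\ab$ the rescaling $\ab\mapsto c\ab$ corresponds to $\Ab=[\ab\ \bbb]\mapsto \Ab\Mb$ with $\Mb$ equal to $c$ times the $2\times2$ identity, so $\det(\Mb)=c^2$ and $\Lb$ scales by $c^2$, exactly matching the $c^2$ scaling of $\omega_{\mathrm{eff}}$.

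There is essentially no obstacle here, since the corollary is a direct substitution into the already-established planar-rotation form. The only care needed is presentational: stating precisely that ``absorption'' denotes the reparametrization $(\omega,\ab)\mapsto(\omega\|\ab\|^2,\ \ab/\|\ab\|)$ under which $\Gb(\cdot)$ is invariant, and, if one wishes to be fully rigorous as $\alpha\to0$, noting that the trigonometric coefficients $\tfrac{\sin(n\omega s)}{s}$ and $\tfrac{1-\cos(n\omega s)}{s^2}$ extend continuously via their power series, so the angle statement persists in the degenerate limit.
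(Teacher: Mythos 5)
Your proposal is correct and follows essentially the same route as the paper: the corollary is a direct read-off from the displayed closed form $\exp(n\omega\Lb(\ab,\mathcal{J}\ab))=\mathbf{I}-(1-\cos(n\omega\alpha))\mathbf{P}_{\mathcal{U}}-\sin(n\omega\alpha)\,\mathcal{J}\mathbf{P}_{\mathcal{U}}$ together with the preceding scalar simplifications $\gamma=0$, $\beta=\alpha$, $s=\alpha=\|\ab\|^2$. Your extra remarks on the invariance of $\mathbf{P}_{\mathcal{U}}$ under $\ab\mapsto c\ab$ and the consistency with the $\det(\Mb)$ scaling in the Initialization paragraph are sound and merely make explicit what the paper leaves implicit.
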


\subsection{Application to relative encoding and equivariance}\label{sec:theory_rel}

\noindent We now demonstrate how the \textbf{GRAPE-M} operator $\Gb(n)$ is applied in practice. As established in Section~\ref{sec:foundations}, the operator's group structure guarantees the exact relative law. We first transform the query and key vectors, $\qb_i$ and $\kb_j$, into position‑aware representations, $\tilde{\qb}_i$ and $\tilde{\kb}_j$:
\begin{align*}
\tilde{\qb}_i := \Gb(i)\qb_i, \qquad \tilde \kb_j := \Gb(j)\kb_j.
\end{align*}
It follows from the exact relative law established in Section~\ref{sec:foundations} that the attention score between these position-aware vectors simplifies to:
\begin{align*}
\tilde \qb_i^\top \tilde \kb_j = \qb_i^\top \Gb(i)^\top \Gb(j)\kb_j = \qb_i^\top \Gb(j-i)\kb_j.
\end{align*}
Hence, the attention score depends solely on the relative offset $j-i$, not on the absolute positions.

\noindent\textbf{Streaming and caching.} At inference, cache $\kb_j^\star = \Gb(j) \kb_j$ once when token $j$ arrives. At step $t$, form $\tilde \qb_t = \Gb(t) \qb_t$ and compute logits $\tilde \qb_t^\top \kb_j^\star$. No cache rotation is needed when $t$ increments; complexity matches RoPE.
A full integration into multi-head attention (per-head formulation, logits, and streaming) is detailed in Section~\ref{sec:multi_head_attention}.

\section{Multi‑Subspace Multiplicative GRAPE}
\label{sec:ms_grape}

\noindent
A single rank‑2 generator acts on a 2D subspace, leaving the rest of the $d$‑dimensional space untouched. To encode position across the entire hidden dimension, we can combine multiple generators. This leads to the Multi‑Subspace (MS) \textbf{Multiplicative GRAPE (GRAPE-M)} model, which forms the basis for both RoPE and more expressive types. Detailed rank‑2 algebra appears in Appendix~\ref{app:rank2_details}.

\subsection{Multi‑Subspace GRAPE-M and RoPE as a Special Case}

\noindent
The simplest way to combine generators is to ensure they act on mutually orthogonal subspaces, which guarantees they commute. Let $d$ be even. For $i=1,\ldots,d/2$, we can define a set of rank-2 generators $\{\Lb_i\}$, each acting on a distinct 2D plane. RoPE is the canonical example of this construction. We further discussed non-commuting multiplicative GRAPE in Appendix~\ref{sec:noncommuting-grape-m}.

\noindent
Let the $2\times 2$ canonical skew matrix be $\Jb=\begin{psmallmatrix}0&-1\\[0.2ex]1&0\end{psmallmatrix}$ and the coordinate selector be $\Ub_i=[\eb_{2i-1}\ \eb_{2i}]\in\mathbb{R}^{d\times 2}$. We set the rank‑2 generators as $\Lb_i = \Ub_i \Jb \Ub_i^\top = \Lb(\eb_{2i-1}, \eb_{2i})$ and assign per‑plane frequencies $\theta_i>0$. The total generator is the commuting sum:
\[
\Lb_{\mathrm{RoPE}}=\sum_{i=1}^{d/2}\theta_i \Lb_i\qquad\text{with}\qquad [\Lb_i, \Lb_j]=0\ \text{ for }i\neq j.
\]
Then
\begin{align}
\Gb(n)
&=\exp\big(n \Lb_{\mathrm{RoPE}}\big)
=\prod_{i=1}^{d/2} \exp(n\theta_i \Lb_i)
=\blockdiag\big(\Rb_2(n\theta_1),\ldots, \Rb_{2}(n\theta_{d/2})\big), \label{eq:commuting_ms}
\end{align}
where $\Rb_2(\theta)$ denotes the standard $2\times 2$ rotation matrix introduced in Table~\ref{tab:notation_summary}, and the last equality holds because each term $\exp(n\theta_i \Lb_i)$ is identity except for a single $2{\times}2$ rotation block on its diagonal. Eq.~\eqref{eq:commuting_ms} is precisely the RoPE mapping: a block‑diagonal product of planar rotations with per‑subspace angles $n\theta_i$.

Equality holds when the planes $\{\Ub_i\}$ are the coordinate 2D blocks and $\{\theta_i\}$ follow the canonical log-uniform spectrum.

\begin{proposition}[RoPE is a multiplicative GRAPE]\label{prop:rope_via_J}\label{prop:rope_as_grape}
Choose $d/2$ mutually orthogonal vectors $\{\ab_i\}$ and set $\bbb_i=\mathcal{J}\ab_i$ with per-plane angles $\theta_i$. Then the commuting MS-GRAPE
$\Gb(n) = \prod_{i=1}^{d/2}\exp(n\theta_i \Lb(\ab_i, \mathcal{J}\ab_i))$
equals the standard RoPE map in a (possibly learned) orthogonal basis. If the planes are the canonical coordinate pairs and $\{\theta_i\}$ follow the log-uniform spectrum, we recover the canonical RoPE exactly.
\end{proposition}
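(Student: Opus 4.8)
The plan is to establish the two claims of Proposition~\ref{prop:rope_as_grape} in turn: first that the commuting multi-subspace product equals the standard RoPE map in a (possibly learned) orthogonal basis, and second that choosing the canonical coordinate planes with a log-uniform spectrum recovers RoPE exactly. Both claims reduce to understanding the action of each factor $\exp(n\theta_i \Lb(\ab_i,\mathcal{J}\ab_i))$ on its plane and verifying that the factors assemble into a block-diagonal rotation.

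First I would invoke the $\bbb=\mathcal{J}\ab$ analysis from Section~\ref{subsec:b_eq_Ja}. For each $i$, since $\bbb_i=\mathcal{J}\ab_i$ gives $\gamma=0$, $\beta=\alpha$, and $s=\alpha=\|\ab_i\|^2$, the closed form
\begin{align*}
\exp\big(n\theta_i \Lb(\ab_i,\mathcal{J}\ab_i)\big)
= \mathbf{I} - \big(1-\cos(n\theta_i\alpha_i)\big)\mathbf{P}_{\mathcal{U}_i} - \sin(n\theta_i\alpha_i)\,\mathcal{J}\mathbf{P}_{\mathcal{U}_i}
\end{align*}
applies directly, where $\mathcal{U}_i=\mathrm{span}\{\ab_i,\mathcal{J}\ab_i\}$ and $\mathbf{P}_{\mathcal{U}_i}$ is the orthogonal projector onto it. Taking $\|\ab_i\|=1$ (by the Frequency--norm coupling corollary, any scale is absorbed into $\theta_i$) this is a pure planar rotation by angle $n\theta_i$ on $\mathcal{U}_i$ and the identity on $\mathcal{U}_i^{\perp}$.

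Next I would use the mutual orthogonality of the $\{\ab_i\}$. Because the planes $\mathcal{U}_i$ are pairwise orthogonal, the generators commute, $[\Lb_i,\Lb_j]=0$ for $i\neq j$ (this is exactly the commuting condition stated before Eq.~\eqref{eq:commuting_ms}), so the exponential of the sum factors as the product of the individual exponentials. Since each factor acts as a rotation on its own plane and as the identity elsewhere, and the planes span $\RR^d$ orthogonally, the product is an orthogonal operator that restricts to a $2$D rotation by angle $n\theta_i$ on each $\mathcal{U}_i$. Assembling in the orthonormal basis $\{\ab_i,\mathcal{J}\ab_i\}_{i=1}^{d/2}$ gives exactly $\blockdiag(\Rb_2(n\theta_1),\ldots,\Rb_2(n\theta_{d/2}))$, which is the RoPE map expressed in this (possibly learned) orthogonal basis. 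For the second claim, specializing $\ab_i=\eb_{2i-1}$ makes $\mathcal{J}\ab_i=\eb_{2i}$, so the planes are the canonical coordinate pairs and the basis is the standard one; choosing $\{\theta_i\}$ to be the log-uniform RoPE spectrum then reproduces the canonical RoPE map identically, as in Eq.~\eqref{eq:commuting_ms}.

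The main obstacle, though it is more bookkeeping than conceptual, is the basis-change argument making precise the phrase ``in a (possibly learned) orthogonal basis.'' One must check that the orthonormal frame $\{\ab_i,\mathcal{J}\ab_i\}$ really is orthonormal: within each plane $\ab_i^{\top}(\mathcal{J}\ab_i)=0$ and $\|\mathcal{J}\ab_i\|=\|\ab_i\|$ hold by the properties of $\mathcal{J}$ noted in Section~\ref{subsec:notation}, but across planes one needs $\ab_i^{\top}\ab_j=0$, $\ab_i^{\top}(\mathcal{J}\ab_j)=0$, etc. The cross-plane orthogonality of the $\ab$'s is assumed; the mixed relations $\ab_i^{\top}(\mathcal{J}\ab_j)=0$ require that $\mathcal{J}$ preserve the orthogonal decomposition into the chosen planes, which is automatic when the $\ab_i$ are built from the canonical $\mathcal{J}$-blocks but must be stated carefully for a general learned frame. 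Once the frame is verified orthonormal, the change of basis that sends it to the canonical coordinate frame is itself an element of $\SO(d)$ conjugating $\Gb(n)$ to the block-diagonal RoPE form, completing the identification.
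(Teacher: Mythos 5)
Your proof is correct and follows essentially the same route as the paper's (implicit) argument: the $\bbb_i=\mathcal{J}\ab_i$ Rodrigues-form analysis of Section~\ref{subsec:b_eq_Ja} makes each factor a pure planar rotation by $n\theta_i$, commutativity on orthogonal planes assembles these into $\blockdiag(\Rb_2(n\theta_1),\ldots,\Rb_2(n\theta_{d/2}))$ in the frame $\{\ab_i,\mathcal{J}\ab_i\}$, and the basis change to canonical coordinates recovers RoPE exactly as in Eq.~\eqref{eq:commuting_ms}. Your closing caveat is a genuine and worthwhile catch rather than mere bookkeeping: mutual orthogonality of the $\ab_i$ alone does not force $\ab_i^\top\mathcal{J}\ab_j=0$ (e.g.\ $\ab_1=\eb_1$, $\ab_2=\eb_2$ gives $\mathcal{J}\ab_1=\eb_2$ and hence coinciding planes), so the proposition implicitly requires the planes $\mathcal{U}_i$ --- equivalently the full frame $\{\ab_i\}\cup\{\mathcal{J}\ab_i\}$ --- to be mutually orthogonal, a hypothesis the paper's statement leaves tacit.
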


\noindent\textbf{Spectral parameterization.}
Classical RoPE chooses $\theta_i$ on a log‑uniform grid across $i$. In GRAPE, $\theta_i$ can be learned or shared/tied across heads or layers. The MS‑GRAPE view also allows replacing the coordinate selectors $\Ub_i$ by a learned orthogonal basis $\Bb \in \SO(d)$ so that $\Lb = \sum_i \theta_i \Bb \Ub_i \Jb \Ub_i^\top \Bb^\top$, preserving commutativity while learning subspaces.

\noindent\textbf{Multimodal GRAPE.}
Please refer to Appendix~\ref{sec:2drope-3drope} for 2D and 3D GRAPE for Vision and Multimodal Position Encoding.

\section{Additive Group Representational Position Encoding}
\label{sec:grape_additive}

\noindent
This section shows that additive positional mechanisms (absolute shifts of features and additive logit biases, including ALiBi~\citep{press2021train}) also admit a group-theoretic formulation. The key is a homogeneous lift to an augmented space and a one-parameter subgroup of the general linear group $\mathrm{GL}$ that acts by unipotent (all eigenvalues equal to $1$) transformations. This yields an exact relative law and streaming/cache rules analogous to Section~\ref{sec:theory_rel}.

\subsection{Homogeneous lift and a unipotent action}
To produce additive biases from a multiplicative group action, we employ the homogeneous lift. This is a standard method in linear algebra for representing affine transformations (such as translations) as linear transformations in a higher-dimensional space. Let $\widehat{\xb}\in\mathbb{R}^{d+k}$ denote a homogeneous augmentation of $\xb\in\mathbb{R}^d$. We work within the general linear group $\mathrm{GL}(d{+}k)$ and its Lie algebra $\mathfrak{gl}(d{+}k)$. Fix a nilpotent generator $\Ab$ such that $\Ab^2=\mathbf{0}$. Its exponential is unipotent:
\begin{align*}
\Gb_{\mathrm{add}}(n)
:= \exp(n\,\omega\, \Ab)
= \Ib + n\,\omega\, \Ab
\in \mathrm{GL}(d{+}k).
\end{align*}
Crucially, this action forms a one-parameter subgroup: $\Gb_{\mathrm{add}}(n{+}m)=\Gb_{\mathrm{add}}(n)\Gb_{\mathrm{add}}(m)$.

\noindent \textbf{Canonical rank-1 generator in a $(d{+}1)$-lift.}
A convenient concrete choice (used in Appendix~\ref{subsec:eigs_additive_pi}) is the rank-$1$ index-$2$ nilpotent
\begin{equation}
\label{eq:add_generator}
\Ab
=
\begin{bmatrix}
\mathbf{0}_{d\times d} & \ub_{\mathrm{shift}} \\
\mathbf{0}^\top & 0
\end{bmatrix}
\in \mathbb{R}^{(d+1)\times(d+1)},
\qquad
\Ab^2=\mathbf{0},
\end{equation}
where $\ub_{\mathrm{shift}}\in\mathbb{R}^d$; acting on $\widehat{\xb}=[\xb;1]$ yields $\Gb_{\mathrm{add}}(n)\widehat{\xb}=[\xb+n\omega \ub_{\mathrm{shift}};1]$, i.e., a translation in the original $d$-dimensional space.

\noindent \textbf{Application and exact relative law in \texorpdfstring{$\mathrm{GL}$}{GL}.}
For queries/keys augmented as $\widehat{\qb}_i$ and $\widehat{\kb}_j$, define
\begin{align}
\label{eq:add_transform}
\widetilde{\qb}_i := \Gb_{\mathrm{add}}(i)\,\widehat{\qb}_i,
\qquad
\widetilde{\kb}_j := \Gb_{\mathrm{add}}(j)^{-{\top}}\,\widehat{\kb}_j.
\end{align}
We use the shorthand $\Gb_{\mathrm{add}}(j)^{-{\top}} := (\Gb_{\mathrm{add}}(j)^{-1})^\top$ to emphasize that we first take the group inverse in $\mathrm{GL}$ and then transpose it. This composition results in the final form:
\begin{equation}
\label{eq:add_relative_law}
\widetilde{\qb}_i^\top \widetilde{\kb}_j
= \widehat{\qb}_i^\top \Gb_{\mathrm{add}}(j{-}i)^{-{\top}} \widehat{\kb}_j,
\quad\text{depending only on } j{-}i.
\end{equation}

\noindent \textbf{Closed form and content-gated additive term.}
To incorporate content dependence while strictly maintaining the unipotent group structure, we decompose the action into separate commuting generators for queries and keys. Let $\Ab_0$ be the canonical rank-1 nilpotent basis in a $(d{+}2)$-dimensional lift (augmenting $\widehat{\qb}=[\qb; 1; 0]$ and $\widehat{\kb}=[\kb; 0; 1]$):
\begin{equation}
\label{eq:A0_def}
\Ab_0 = \eb_{d+2}\eb_{d+1}^\top \in \mathbb{R}^{(d+2)\times (d+2)}.
\end{equation}
We modulate this basis via non-negative scalar gates $\lambda_q(\qb_i) = \softplus\!\big(\vb^\top \qb_i / \sqrt d\big)$ and $\lambda_k(\kb_j) = \softplus\!\big(\ub^\top \kb_j / \sqrt d\big)$.

Because the scaled generators share the same nilpotent basis, they commute.
Let $\Lambda_{ij}:=\lambda_q(\qb_i)+\lambda_k(\kb_j)\ge 0$ and define the effective (endpoint-dependent) nilpotent generator
$\Ab_{ij}:=-\Lambda_{ij}\,\Ab_0$, so that $\Ab_{ij}^\top=-\Lambda_{ij}\,\Ab_0^\top$.
Their composition in the relative offset $m=j-i$ yields a summed decay rate.
The unipotent inverse transpose admits an exact closed form that converts the offset $m$ (negative in causal attention) into a penalty:
\begin{equation}
\label{eq:add_closed_form}
\Gb_{\mathrm{add}}(m)^{-{\top}}
= \Ib - m\,\omega\,\Ab_{ij}^\top
= \Ib + m\,\omega\, \big(\lambda_q(\qb_i) + \lambda_k(\kb_j)\big)\, \Ab_0^\top,
\qquad m=j{-}i.
\end{equation}
This formulation is rigorous: the $\softplus(\cdot)$ function ensures the effective decay rate is strictly non-negative. When $j \le i$, the offset $m$ is negative, resulting in a monotonic penalty. Applying this operator yields the exact affine score:
\begin{align}
\label{eq:add_bias_querykeygated}
\widetilde{\qb}_i^\top \widetilde{\kb}_j
= \qb_i^\top \kb_j + (j{-}i)\,\omega\, \big[\softplus\!\big(\vb^\top \qb_i / \sqrt d\big) + \softplus\!\big(\ub^\top \kb_j / \sqrt d\big)\big].
\end{align}
Note that under the asymmetric lift $\widehat{\qb}=[\qb;1;0]$, $\widehat{\kb}=[\kb;0;1]$, the base inner product satisfies $\widehat{\qb}_i^\top\widehat{\kb}_j=\qb_i^\top\kb_j$ (no extra constant), while the rank-$1$ update contributes exactly $(j{-}i)\omega(\lambda_q+\lambda_k)$, which is non-positive for causal offsets $j\le i$, which we denoted as \textbf{GRAPE-A-QK}. This derives a learnable, content-adaptive linear bias (slope) from first principles: it is the sum of query-dependent and key-dependent unipotent actions in the general linear group.

\subsection{Exact ALiBi as a Rank-1 unipotent in \texorpdfstring{$\mathrm{GL}(d{+}2)$}{GL(d+2)}}
\label{sec:alibi_unipotent}
ALiBi adds a head-specific scalar slope $\beta_h (j{-}i)$ to the logits that is independent of content. This is captured exactly by augmenting with two constant coordinates:
\begin{align*}
\widehat{\qb}_i=[\qb_i; \;1;\;0]\in\mathbb{R}^{d+2},\qquad
\widehat{\kb}_j=[\kb_j; \;0;\;1]\in\mathbb{R}^{d+2},
\end{align*}
and choosing the rank-1 nilpotent generator
\begin{align}
\label{eq:alibi_generator}
\Ab_h^\top \;=\; -\,\beta_h\, \eb_{d+1}\, \eb_{d+2}^\top
\quad\Longleftrightarrow\quad
\Ab_h \;=\; -\,\beta_h\, \eb_{d+2}\, \eb_{d+1}^\top,
\qquad
(\Ab_h^\top)^2=\mathbf{0}.
\end{align}
Then $\Gb_{\mathrm{add},h}(m)^{-{\top}}=\Ib - m\,\Ab_h^\top$ and
\begin{align*}
\widehat{\qb}_i^\top \Gb_{\mathrm{add},h}(j{-}i)^{-{\top}} \widehat{\kb}_j
= \qb_i^\top \kb_j \;+\; (j{-}i)\,\beta_h,
\end{align*}
i.e., the ALiBi term emerges as a unipotent $\mathrm{GL}(d{+}2)$ action with exact relative composition.

\begin{remark}[Why $\mathrm{GL}(d{+}2)$]
While general additive biases, such as key-gated slopes, can be realized in $\mathrm{GL}(d{+}1)$, the content-independent nature of exact ALiBi necessitates the $(d{+}2)$-dimensional lift. Generating a non-zero scalar bias via a nilpotent generator requires the interaction of two distinct constant coordinates, one for query, one for key, to avoid self-loops that would violate the trace-zero property of the generator.
\end{remark}

\noindent \textbf{FoX as GRAPE-A.}
Let $f_t\in(0,1]$ be per‑token forget scalars and set $\omega_t:=\log f_t$. The resulting additive bias is
$b(t,j)=\sum_{\ell=j+1}^{t}\omega_\ell$, which coincides with FoX's forgetting bias $D_{ij}$.
A full derivation and the unipotent path product are given in Appendix~\ref{app:fox_as_grape_add}.

\section{Path Integral Additive GRAPE}
\label{sec:grape_add_pi}

\noindent
Additive GRAPE (GRAPE-A) realizes exactly relative additive logits via a one‑parameter unipotent action in the general linear group $\mathrm{GL}$; the bias depends only on an offset $m=j{-}i$ (or a contextual phase difference $\Phi_j{-}\Phi_i$ when using cumulative phases). Here the ``phase'' $\Phi_t$ is a scalar path variable, typically defined as a cumulative sum $\Phi_t=\sum_{\ell < t}\omega_\ell$ of per-token frequencies $\omega_\ell$, so that $\Phi_j-\Phi_i$ plays the role of an effective relative position. In practice, we sometimes want the amount of additive encouragement/suppression between a key at $j$ and a query at $t$ to depend on the endpoint $t$ (e.g., the current syntactic or semantic needs of the query token), while preserving causality, boundedness, and clean composition with the orthogonal GRAPE acting on $(\qb,\kb)$. We formalize this by a rigorously defined path‑integral sum, deriving conditions under which the exact relative law of Additive GRAPE is recovered.

\noindent \textbf{Definition (Path‑integral bias).}
Fix a head $h$ and per-head scale $\alpha_h>0$. For each time $u$, let $\mathbf{p}_{u,h}\in\mathbb{R}^d$ be a positional embedding obtained from token-local features (a linear projection followed by RMS normalization in our implementation). Let $\mathcal{J}$ be the canonical block-diagonal $90^\circ$ operator (Section~\ref{subsec:b_eq_Ja}), and define $\Rb_\ell:=\exp(\ell\,\mathcal{J})$ (a fixed commuting rotation).
For a link function $g:\mathbb{R}\to(-\infty,0)$ that is monotone increasing and $1$-Lipschitz\footnote{Our experiments take $g(z)=\log(\text{Sigmoid}(z))$; then $g'(z)=1-\text{Sigmoid}(z)\in(0,1)$, ensuring $1$-Lipschitzness.}, define the \emph{edge potential}
\begin{equation}
\label{eq:pa-edge}
\psi_h(t,\ell)
:= \alpha_h\, g\left(\frac{1}{d}\,\big\langle \mathbf{p}_{t,h},\, \Rb_\ell\,\mathbf{p}_{\ell,h}\big\rangle\right)
\ \le\ 0,
\qquad \ell<t .
\end{equation}
The vectors $\mathbf{p}_{t,h}$ and $\mathbf{p}_{\ell,h}$ here are the positional embedding.
The \emph{path‑integral additive bias} from key position $j$ to query position $t$ is the causal sum
\begin{equation}
\label{eq:grape_pa_bias}
b_h(t,j)\ :=\ \sum_{\ell=j+1}^{t} \psi_h(t,\ell)
\ \ \le\ 0 .
\end{equation}
The attention logit combines this additive term with either the raw or orthogonally-rotary bilinear part:
\begin{equation}
\label{eq:pa-logit}
\ell_{t,j,h}
\;=\; \frac{1}{\sqrt d}\,\qb_{t,h}^\top \kb_{j,h}\ +\ b_h(t,j)
\quad\text{or}\quad
\ell_{t,j,h}
\;=\; \frac{1}{\sqrt d}\,\qb_{t,h}^\top \Gb_h(j{-}t)\kb_{j,h}\ +\ b_h(t,j).
\end{equation}

\noindent \textbf{Group‑theoretic formalization and path composition.}
Let $\Eb\in\mathbb{R}^{(d{+}2)\times(d{+}2)}$ be a fixed rank-$1$ nilpotent with $\Eb^2 = \mathbf{0}$ (e.g., $\Eb = \eb_{d+2}\eb_{d+1}^\top$ as in Section~\ref{sec:alibi_unipotent}). For each fixed endpoint $t$, define endpoint-indexed unipotent factors
\[
\Hb_h^{(t)}(\ell)\ :=\ \Ib - \psi_h(t,\ell)\, \Eb.
\]
Since $\Eb^2=0$, the path product along $(j,t]$ collapses additively:
\begin{equation}
\label{eq:pa-unipotent}
\prod_{\ell=j+1}^{t} \Hb_h^{(t)}(\ell)
\ =\ \Ib - \bigg(\sum_{\ell=j+1}^{t}\psi_h(t,\ell)\bigg)\,\Eb
\ =\ \Ib - b_h(t,j)\, \Eb.
\end{equation}
Scoring in homogeneous coordinates as in Section~\ref{sec:grape_additive} with the paired inverse-transpose removes multiplicative anisotropy and yields exactly the additive term $b_h(t,j)$, cf.\ Eq.~\eqref{eq:add_relative_law}, since
$\big(\Ib - b_h(t,j)\Eb\big)^{-{\top}}=\Ib + b_h(t,j)\Eb^\top$ and
$\widehat{\qb}_{t,h}^\top \Eb^\top \widehat{\kb}_{j,h}=1$ under the asymmetric lift.
The \emph{rowwise} semigroup law is preserved (Eq.~\eqref{eq:pa-unipotent}), while the $ t$-dependence of the factors intentionally relaxes the global one-parameter group law.

\label{par:pa-to-add}
\noindent \textbf{Relation to GRAPE-A.} GRAPE-AP strictly contains GRAPE-A as the special case in which edge potentials do not depend on the endpoint:
\begin{align*}
&\psi_h(t,\ell)\equiv -\,\theta_h\,a_\ell,\qquad \theta_h\ge 0,\ a_\ell\ge 0
\ \Longrightarrow\ \\
&b_h(t,j)=-\theta_h\sum_{\ell=j+1}^t a_\ell =-\theta_h\big(A_t-A_j\big)\ \le\ 0,\ \ \ A_u:=\sum_{\ell< u}a_\ell .
\end{align*}
Two important instances follow directly:
\begin{itemize}[leftmargin=*, itemsep=1pt, topsep=1pt]
\item \textbf{Exact ALiBi.} $a_\ell\equiv 1$ gives $b_h(t,j)=-\theta_h(t{-}j)=\theta_h(j{-}t)$; setting $\theta_h=\beta_h$ recovers the ALiBi term from Section~\ref{sec:alibi_unipotent}.
\item \textbf{Phase-modulated Additive GRAPE.} If $a_\ell=\omega_\ell$ with $\omega_\ell=g(x_\ell)\ge 0$, then $b_h(t,j)=-\theta_h(\Phi_t-\Phi_j)$ with $\Phi_u=\sum_{\ell< u}\omega_\ell$.
\end{itemize}
In both cases, $b_h(t,j)$ depends only on a (possibly contextual) phase difference and thus obeys the exact relative law with the same streaming/cache policy as Section~\ref{sec:grape_additive}. Outside these endpoint-independent regimes, GRAPE-AP provides strictly more expressive, path‑integral biases while preserving row-wise path composition (Eq.~\eqref{eq:pa-unipotent}).

\noindent \textbf{Computation and streaming.}
For each head $h$ and decoding step $t$, compute the row $\{\psi_h(t,\ell)\}_{\ell\le t}$ by a single similarity sweep $\ell\mapsto \langle \mathbf{p}_{t,h},\,\Rb_\ell \mathbf{p}_{\ell,h}\rangle$ (the rotated probes $\Rb_\ell \mathbf{p}_{\ell,h}$ can be cached on arrival), apply the link $g$, and take a prefix sum to obtain $j\mapsto b_h(t,j)$. This yields $O(t)$ per-step overhead with $O(1)$ recomputation per cached key; memory is $O(L)$ per head for the cached probes (or $O(d)$ if the per-$\ell$ rotations are recomputed on the fly).

\noindent \textbf{Spectral and stability.}
Each factor $\Hb_h^{(t)}(\ell)=\Ib-\psi_h(t,\ell)\Eb$ is unipotent with all eigenvalues $1$ and at most two singular values deviating from $1$; the full path product equals $\Ib-b_h(t,j)\Eb$ (Eq.~\eqref{eq:pa-unipotent}). As in Appendix~\ref{subsec:eigs_additive_pi}, the paired inverse-transpose used for scoring cancels multiplicative distortions and delivers exactly the additive bias $b_h(t,j)$; operator norms remain controlled linearly in $|b_h(t,j)|$.

A more extensive spectral analysis, including eigenvalue structure and singular-value behavior across GRAPE variants, is provided in Appendix~\ref{sec:spectral_eigen}. There, we also give an explicit comparison to PaTH Attention \citep{yang2025path}, which is shown to be contractive and near singular. These properties may impair PaTH's effectiveness in long-context modeling.

\section{Experiments}
\label{sec:experiments}
In this section, we evaluate the performance of GRAPE on the language modeling task in comparison with baseline positional encoding mechanisms, including RoPE~\citep{su2024roformer}, AliBi~\citep{press2021train}, as well as Forgetting Transformer (FoX)~\citep{lin2025forgetting}.

\vspace{-1ex}
\subsection{Implementation Details}
\label{sec:impl_fast_nc}
Based on the nanoGPT codebase~\citep{Karpathy2022}, our experiments are implemented based on the Llama model~\citep{touvron2023llama}. We only change the positional encoding mechanism and keep the rest of the model architecture the same as Llama. We choose FineWeb-Edu 100B dataset~\citep{lozhkov2024finewebedu}, which contains 100 billion training tokens and 0.1 billion validation tokens, and we randomly choose 50B tokens for training. Our medium models (353M) have 24 layers and 8 heads, with a hidden size of 1024; large models (770M) have 36 layers and 10 heads, with a hidden size of 1280 and a head dimension of 128. We applied QK RMSNorm for training stability~\citep{yang2025qwen3}. The context length is set to 4,096, and the batch size is 480. All the large models (770M) are optimized by AdamW optimizer~\citep{loshchilov2017decoupled}, with a maximum learning rate of $2 \times 10^{-3}$ for medium models and $1 \times 10^{-3}$ for large models, $(\beta_1,\beta_2)=0.9,0.95$, and a weight decay of 0.01. We use a cosine learning rate scheduler with 2,000 warm-up iterations, and the minimum learning rate is $3\times 10^{-5}$. We also clip the gradient to 1.0 for stabler training. The frequency of RoPE is set to 10,000. Moreover, for fair comparison, we do not use FoX-Pro and disabled the KV-shift module within it.

\subsection{Result Analysis}
The curves for training and validation loss of models with a variant positional encoding mechanism are displayed in Figures~\ref{fig:medium} and \ref{fig:large}. This analysis provides specific insight into the source of the framework's stability and performance. It can be observed that GRAPE can keep a persistent edge over other mechanisms, including RoPE and FoX. Moreover, the model with RoPE suffers from training instability shown in Figure 3 (a), while the model with GRAPE embedding steadily improves during the training process.

\vspace{-3ex}
\begin{table*}[htb!]
\caption{The evaluation results of medium models with different positional encoding mechanisms pre-trained using the FineWeb-Edu 100B dataset (0-shot with lm-evaluation-harness). Avg. is recomputed over the remaining 7 tasks. The best scores in each column are \textbf{bolded}, and the second best are \underline{underlined}. Results are ranked separately for w/o KV-shift and w/ KV-shift.}
\label{tab:medium-0}
\centering
\resizebox{\linewidth}{!}{
\begin{tabular}{l|ccccccc|c}
\toprule
\textbf{Method} & \textbf{ARC-E} & \textbf{ARC-C} & \textbf{HellaSwag} & \textbf{OBQA} & \textbf{PIQA} & \textbf{WinoGrande} & \textbf{SciQ} & \textbf{Avg.} \\
\midrule
RoPE & 56.36 & 30.38 & 44.65 & 35.20 & 68.77 & 52.33 & 74.40 & 51.73 \\
GRAPE-M-ctx & 56.31 & 29.95 & 44.37 & 33.40 & 68.23 & 53.28 & 76.90 & 51.78 \\
GRAPE-M-nonctx & 56.82 & 28.84 & 44.36 & 32.80 & 68.93 & \textbf{53.67} & 77.10 & 51.79 \\
AliBi & 58.21 & 29.78 & 45.38 & 34.60 & \textbf{70.08} & \underline{53.51} & 78.50 & 52.87 \\
FoX & 58.38 & 30.89 & \textbf{45.80} & 35.00 & \underline{69.37} & 52.88 & 78.40 & 52.96 \\
GRAPE-A-Q & 58.00 & 30.89 & 45.70 & \textbf{36.20} & 68.93 & 52.96 & 77.00 & 52.81\\
GRAPE-A-K & \underline{58.54} & 30.72 & 45.13 & 34.60 & 68.39 & 53.35 & 76.30 & 52.43\\
GRAPE-A-QK & 57.95 & \textbf{32.00} & \underline{45.77} & 33.40 & \underline{69.37} & \underline{53.51} & \underline{79.00} & \underline{53.00}\\
GRAPE-AP & \textbf{59.26} & \underline{31.31} & 45.42 & \underline{35.60} & 68.17 & 53.28 & \textbf{79.70} & \textbf{53.25} \\
\midrule
RoPE (w/ KV-shift) & 58.04 & 30.80 & 45.04 & 35.40 & 68.39 & 53.99 & 78.60 & 52.89 \\
AliBi (w/ KV-shift) & \textbf{59.51} & \textbf{31.31} & 45.93 & 35.40 & \underline{69.15} & 52.88 & 78.10 & 53.18 \\
FoX (w/ KV-shift) & \underline{58.84} & \underline{30.97} & \textbf{46.25} & 34.20 & 67.85 & \underline{55.41} & \textbf{79.70} & \underline{53.32} \\
GRAPE-AP (w/ KV-shift) & 57.32 & 30.55 & \underline{46.18} & \textbf{35.80} & \textbf{69.10} & \textbf{55.64} & \underline{79.60} & \textbf{53.46} \\
\bottomrule
\end{tabular}
}
\end{table*}

\begin{figure*}[ht!]
    \centering
    \vspace{-1ex}
    \subfigure[Training Loss]{
		\label{medium_training_loss}
		\includegraphics[width=0.47\linewidth]{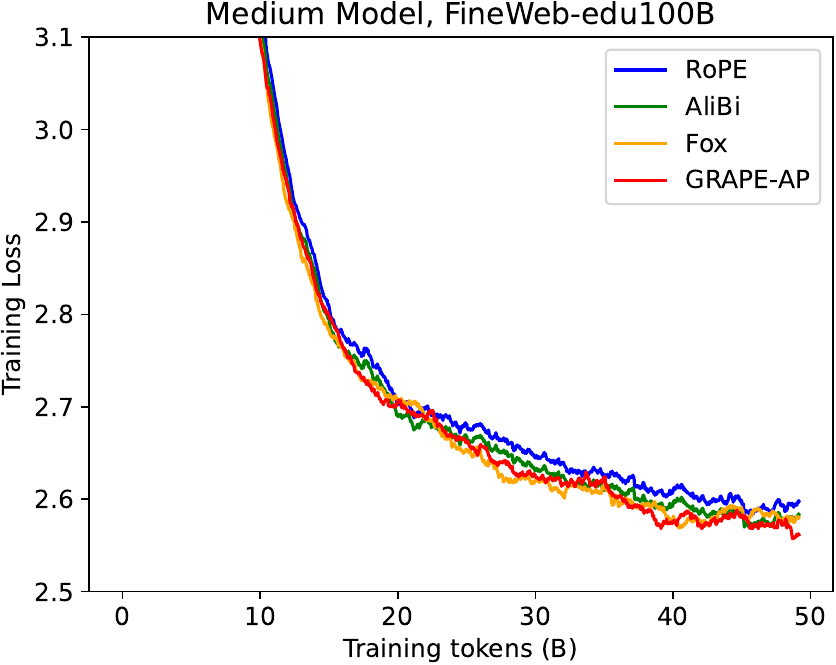}}
    \subfigure[Validation Loss]{
		\label{medium_valid_loss}
		\includegraphics[width=0.47\linewidth]{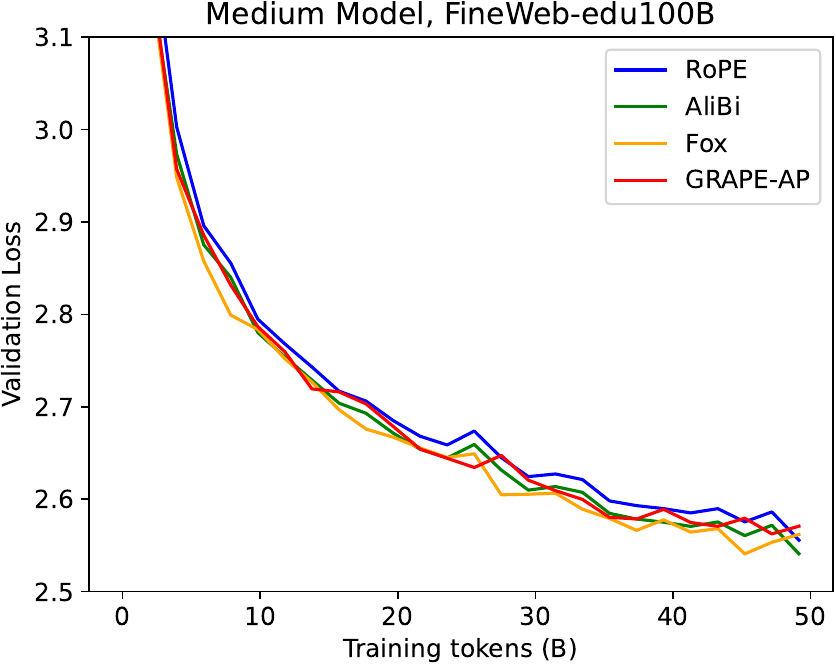}}
    \caption{The training and validation loss of medium-size models (353M), with different positional encoding mechanisms on the FineWeb-Edu 100B dataset.}
    \label{fig:medium}
    \vspace{-1ex}
\end{figure*}

\begin{figure*}[ht!]
    \centering
    \vspace{-1ex}
    \subfigure[Training Loss]{
		\label{large_training_loss}
		\includegraphics[width=0.47\linewidth]{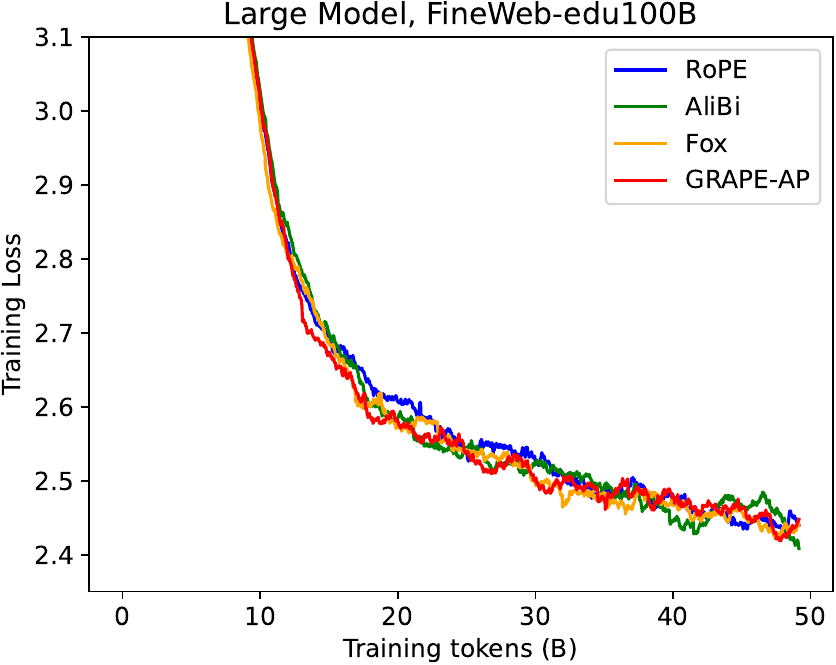}}
    \subfigure[Validation Loss]{
		\label{large_valid_loss}
		\includegraphics[width=0.47\linewidth]{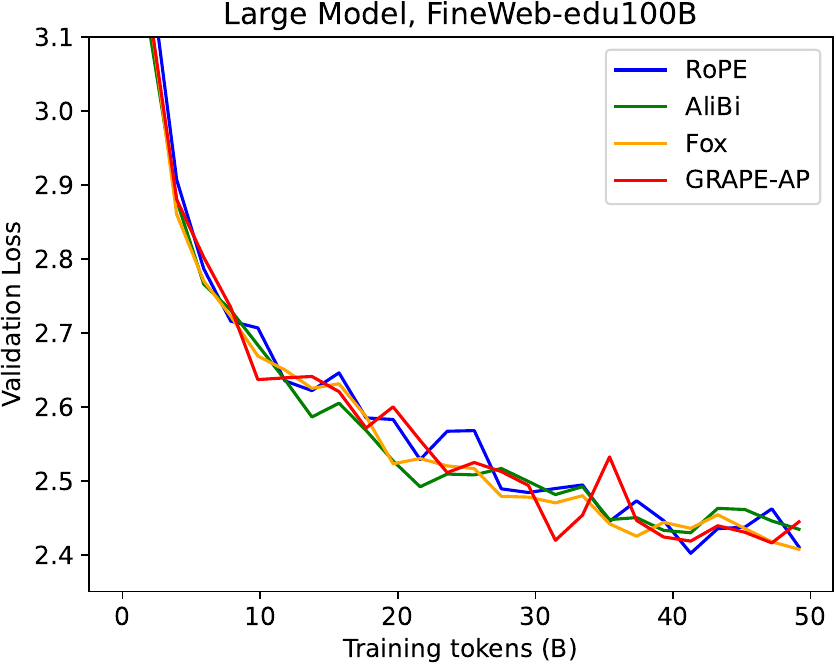}}
    \caption{The training and validation loss of large-size models (770M), with different positional encoding mechanisms on the FineWeb-Edu 100B dataset.}
    \label{fig:large}
    \vspace{-1ex}
\end{figure*}

\begin{table*}[htb!]
\caption{The evaluation results of large models with different positional encoding mechanisms pre-trained using the FineWeb-Edu 100B dataset (0-shot with lm-evaluation-harness). The best scores in each column are \textbf{bolded}, and the second best are \underline{underlined}. Results are ranked separately for w/o KV-shift and w/ KV-shift.}
\label{tab:large-0}
\centering
\resizebox{\linewidth}{!}{
\begin{tabular}{l|ccccccc|c}
\toprule
\textbf{Method} & \textbf{ARC-E} & \textbf{ARC-C}  & \textbf{HellaSwag} & \textbf{OBQA} & \textbf{PIQA} & \textbf{WinoGrande} & \textbf{SciQ} & \textbf{Avg.} \\
\midrule
RoPE & 62.63 & 32.76 & 51.01 & 36.40 & 71.33 & 55.72 & 80.50 & 55.76 \\
GRAPE-M-ctx & 60.52 & 32.51 & 50.13 & 35.40 & 70.67 & 54.38 & 79.50 & 54.73\\
GRAPE-M-nonctx & 60.61 & 33.45 & 49.47 & 35.60 & 70.51 & 53.35 & 80.70 & 54.81 \\
AliBi & 62.67 & \textbf{34.39} & \underline{51.33} & \underline{36.60} & 71.11 & \underline{56.27} & \underline{82.70} & \underline{56.44} \\
FoX & 61.07 & \underline{33.11} & \textbf{51.85} & \textbf{37.80} & 71.27 & 55.33 & 83.70 & 56.30 \\
GRAPE-AP & \textbf{63.89} & 34.22 & 51.52 & 35.40 & \textbf{71.98} & \textbf{56.99} & \textbf{84.40} & \textbf{56.91} \\
\midrule
RoPE (w/ KV-shift) & 62.12 & 33.70 & 50.13 & 35.80 & 71.49 & 53.51 & 80.50 & 55.32 \\
AliBi (w/ KV-shift) & \underline{63.68} & \textbf{34.56} & 52.04 & 37.80 & 71.38 & \underline{55.09} & \textbf{83.90} & \underline{56.92} \\
FoX (w/ KV-shift) & 63.55 & \underline{33.96} & \textbf{52.72} & \underline{38.40} & \textbf{71.71} & \textbf{56.12} & 83.20 & \textbf{57.09}\\
GRAPE-AP (w/ KV-shift) & \textbf{63.72} & 33.11 & \underline{52.29} & \textbf{39.00} & \underline{71.65} & 54.78 & \underline{83.50} & 56.86 \\
\bottomrule
\end{tabular}
}
\end{table*}

\section{Conclusion}

\noindent
GRAPE provides a general framework for positional encoding based on group actions, unifying \emph{multiplicative} and \emph{additive} mechanisms. Multiplicative GRAPE offers a closed‑form, rank‑2 exponential that is relative, compositional, and norm‑preserving; it recovers RoPE and yields learned‑basis and non‑commuting extensions at controlled cost. Additive GRAPE realizes ALiBi and FoX exactly via unipotent general linear group $\mathrm{GL}$ lifts with the same streaming/cache policy. The GRAPE framework integrates seamlessly with existing Transformer models and offers a principled, extensible design space for future architectures.

\section*{Acknowledgements}
We thank the anonymous reviewers and area chairs for their helpful comments. We also thank Zhixuan Lin, Songlin Yang, and others for their constructive feedback on the experimental settings. We used Large Language Models (LLMs) as assistive tools to polish part of this paper, and the roles of LLMs in this work are restricted to improving presentation.

\vspace{5ex}
\bibliographystyle{plainnat}
\bibliography{reference}

\clearpage
\appendix

\renewcommand{\appendixpagename}{\centering \huge Appendix}
\appendixpage
\counterwithin{theorem}{section} 

\startcontents[section]
\printcontents[section]{l}{1}{\setcounter{tocdepth}{2}}
\clearpage

\begin{table*}[t!]
\caption{Summary of Notation and Definitions.}
\label{tab:notation_summary}
\centering
\resizebox{\textwidth}{!}{%
\begin{tabular}{@{}ll@{}}
\toprule
\textbf{Symbol} & \textbf{Definition} \\
\midrule

$\mathrm{GL}(d)$ & \textbf{General Linear Group}: The group of all $d \times d$ invertible matrices. \\
\addlinespace
$\mathrm{SO}(d)$ & \textbf{Special Orthogonal Group}: The group of $d \times d$ orthogonal matrices \\
& with determinant 1 ($\mathbf{R}^\top \mathbf{R} = \mathbf{I}$, $\det(\mathbf{R})=1$). \\
\addlinespace
$\mathrm{SL}(d)$ & \textbf{Special Linear Group}: The group of $d \times d$ matrices with determinant 1. \\
\addlinespace
$\mathfrak{gl}(d)$ & \textbf{general linear algebra}: The Lie algebra of $\mathrm{GL}(d)$, consisting of all $d \times d$ matrices. \\
\addlinespace
$\mathfrak{so}(d)$ & \textbf{special orthogonal algebra}: The Lie algebra of $\mathrm{SO}(d)$, consisting of all $d \times d$ \\
& skew-symmetric matrices ($\Lb^\top = -\Lb$). \\
\addlinespace
$\exp(\cdot)$ & \textbf{Exponential Map}: A map from a Lie algebra (generator) to a Lie group (operator). \\
\addlinespace
$\Rb_2(\theta)$ & \textbf{2D Rotation Matrix}: The matrix $\begin{pmatrix}\cos\theta & -\sin\theta \\ \sin\theta & \cos\theta\end{pmatrix}$. \\
\addlinespace
$\Gb(n)^\top$ & \textbf{Transpose (in $\mathrm{SO}(d)$)}: For $\Gb \in \mathrm{SO}(d)$, the transpose is the group inverse ($\Gb^\top = \Gb^{-1}$). \\
\addlinespace
$\Gb(n)^{-{\top}}$ & \textbf{Inverse Transpose (in $\mathrm{GL}(d)$)}: The transpose of the matrix inverse, $(\Gb^{-1})^\top$. \\
\addlinespace
Unipotent & \textbf{Unipotent Transform}: A linear transformation whose eigenvalues are all 1. \\
\addlinespace
$\mathbf{p}_{u,h}$ & \textbf{Positional Embedding/Representation}: A vector derived from token-local features,\\ 
& obtained via a linear projection followed by RMS normalization. \\
\bottomrule
\end{tabular}%
}
\end{table*}

\section{Related Work}
\label{sec:related_early}

Positional information in Transformers mainly can be categorized into these classes: (a) absolute encodings (sinusoidal or learned) \citep{vaswani2017attention,devlin2019bert,neishi2019relation,kiyono2021shape,likhomanenko2021cape,wang2020complex,liu2020learning,wang2021position,sinha2022curious,wennberg2021case,ke2020rethinking}; (b) relative encodings that depend on offsets \citep{shaw2018self,dai2019transformer,raffel2020exploring,he2020deberta}; and (c) linear logit biases with strong length extrapolation \citep{press2021train,chi2022kerple,chi2022dissecting,li2023functional,ruoss2023randomized}, all shaping recency/extrapolation behavior \citep{haviv2022transformer,kazemnejad2023impact}.

\noindent
\textbf{Multiplicative position encoding.} RoPE realizes offsets as block‑diagonal planar rotations of queries/keys, preserving norms and exact origin invariance; it is widely deployed across LLMs and modalities \citep{su2024roformer,touvron2023llama,touvron2023llama2,heo2024rotary}. Angle/spectrum designs improve long‑context fidelity (e.g., xPos) \citep{sun2022length}; LRPE formalizes separable relative transforms for linear attention models \citep{qin2023linearized}; mechanistic work analyzes frequency usage \citep{barbero2024round}. These methods are also compatible with sparse/linear attentions \citep{beltagy2020longformer,zaheer2020big,katharopoulos2020transformers,choromanski2020rethinking} and with context‑scaling procedures \citep{xiong2023effective,chen2023positioninterpolation,peng2023yarn,zhu2023pose,jin2024selfextend}. Beyond 1D language modeling, 2D RoPE and variants adapt rotary encodings to 2D grids by applying rotations along spatial axes, and have been shown to improve high‑resolution extrapolation in Vision Transformers and related vision models \citep{heo2024rotary}. Recently, LieRE~\citep{ostmeier2025liere} learns dense skew‑symmetric generators whose exponentials produce high‑dimensional rotations for multi‑modal, $n$‑dimensional inputs, while STRING~\citep{schenck2025learning} designs separable, translation‑invariant RoPE‑style encodings that scale to 2D and 3D coordinates in vision and robotics settings \citep{ostmeier2025liere,schenck2025learning}. \textbf{GRAPE-M} identifies RoPE as commuting rank‑2 exponentials in $\SO(d)$ and extends it to learned subspaces and compact non‑commuting mixtures in closed form and a much faster way. Compared with LieRE, which parameterizes a dense skew-symmetric generator and applies a numerical matrix exponential (e.g., \texttt{torch.matrix\_exp}) with $\mathcal{O}(d^3)$ time and $\mathcal{O}(d^2)$ parameters per head, Multiplicative GRAPE decomposes the action into rank-2 subspaces and uses the closed-form Rodrigues-type formulas from Section~\ref{sec:grape-method}, so we only need vector–vector operations with $\mathcal{O}(d)$ cost per head (a detailed comparison between LieRE and GRAPE is presented in Appendix~\ref{sec:compare_liere}.)

\noindent
\textbf{Additive position encoding and forgetting mechanisms.} Additive schemes such as ALiBi ~\citep{press2021train} and related kernelized/randomized forms~\citep{chi2022kerple,chi2022dissecting,li2023functional, ruoss2023randomized} are captured exactly by GRAPE-A as unipotent actions in the general linear group $\mathrm{GL}$ that preserve the same relative law and streaming cacheability. Importantly, \emph{forgetting mechanisms are additive}: the Forgetting Transformer (FoX) implements a learnable per‑head exponential decay in the attention logits and is a specific GRAPE-A / GRAPE-AP instance imposing distance‑dependent attenuation \citep{lin2025forgetting}.
FoX's data‑dependent forget gates yield a path‑additive bias $\Db$ that we show is exactly the endpoint‑independent GRAPE-AP case; see Appendix~\ref{app:fox_as_grape_add} for a constructive equivalence and its streaming implementation \citep{lin2025forgetting}.

\noindent
\textbf{Contextual position encoding.} Content‑adaptive position modulates effective phase or distance via token features through gating/scaling and algebraic parameterizations \citep{wu2020transformer,zheng2024dape,kogkalidis2024algebraic}, and contextual counting (CoPE) \citep{golovneva2024contextual}. GRAPE introduces phase‑modulated and dictionary‑based contextual variants that replace a linear phase with cumulative token‑adaptive phases (single or multi‑subspace) while retaining exact headwise relativity and streaming caches. Finally, models can length‑generalize without explicit encodings (``NoPE'') under suitable training \citep{wang2024nope}, which corresponds to the trivial generator $\Lb=0$ in our view.

\section{Application in Multi‑Head Attention}
\label{sec:multi_head_attention}
Building upon the algebraic foundation for relative encoding established in Section~\ref{sec:theory_rel}, this section details the concrete integration of the rotational map $\Gb(n)$ into the full Multi-Head Attention (MHA) architecture, covering the per-head formulation, streaming policy, and implementation complexity.

\noindent\textbf{Per‑head formulation.}
Let $H$ be the number of heads and $d$ the per‑head width. For head $h\in[H]$, let $(\qb_{t,h}, \kb_{t,h}, \vb_{t,h}) \in \mathbb{R}^d$ denote the query/key/value at position $t$. A \textbf{GRAPE-M} position map is realized as an orthogonal operator $\Gb_{h,t} \in \SO(d)$ applied to $(\qb_{t,h}, \kb_{t,h})$:
\begin{align*}
\tilde \qb_{t,h} = \Gb_{h,t}\, \qb_{t,h},\qquad
\tilde \kb_{t,h} = \Gb_{h,t}\, \kb_{t,h},\qquad
\tilde \vb_{t,h} = \vb_{t,h}.
\end{align*}
The headwise attention logits and outputs are then
\begin{equation}
\label{eq:mha_headwise_logits}
\ell_{t,j,h} = \frac{\tilde \qb_{t,h}^\top \tilde \kb_{j,h}}{\sqrt{d}} = \frac{\qb_{t,h}^\top \big(\Gb_{h,t}^\top \Gb_{h,j}\big) \kb_{j,h}}{\sqrt{d}},
\qquad
\yb_{t,h} = \sum_{j\le t}\mathrm{softmax}\big(\ell_{t,\cdot,h}\big)_j\,\tilde \vb_{j,h},
\end{equation}
with the usual output projection applied after concatenation across heads.

\noindent\textbf{Exact relative law.}
If $\Gb_{h,t}$ arises from a one-parameter subgroup $\Gb_{h}(n)=\exp(n\, \Lb_h)$ (commuting MS‑GRAPE-M, including RoPE and learned commuting bases), then
\begin{align*}
\Gb_{h,t}^\top \Gb_{h,j} = \Gb_h(j{-}t)\qquad\Longrightarrow\qquad
\ell_{t,j,h} = \frac{\qb_{t,h}^\top \Gb_h(j{-}t)\, \kb_{j,h}}{\sqrt{d}},
\end{align*}
so logits depend only on the offset $j{-}t$ (exact origin invariance).

\noindent\textbf{Streaming cache.} Applying the rotational map $\Gb(t)$ independently to each query and key vector is the core property that enables an efficient streaming cache policy. For any type where $\Gb_{t}$ is known at token arrival (non-contextual and phase-modulated), cache $\tilde \kb_{j,h} = \Gb_{h,j} \kb_{j,h}$ once and never rewrite it; at step $t$, compute $\tilde \qb_{t,h} = \Gb_{h,t} q_{t,h}$ and use logits $\ell_{t,j,h}=\tilde \qb_{t,h}^\top \tilde \kb_{j,h}/\sqrt{d}$.

\section{Forgetting Transformer as a Special Additive GRAPE}
\label{app:fox_as_grape_add}

\noindent
The Forgetting Transformer (FoX) introduces a scalar forget gate $f_t\in(0,1]$ per head and timestep and adds the cumulative log‑gate as an additive bias in the attention logits. Concretely, for a head $h$,
\[
f_{t,h}=\sigma(\wb_{f,h}^\top \xb_t + b_{f,h}),\qquad
F_{ij,h}=\prod_{\ell=j+1}^{i} f_{\ell,h},\qquad
D_{ij,h}=\log F_{ij,h}=\sum_{\ell=j+1}^{i}\log f_{\ell,h},
\]
and the attention is
\[
\Ob_h=\mathrm{softmax}\Big(\tfrac{1}{\sqrt d}\Qb\Kb^\top + \Db_h\Big)\Vb .
\tag{FoX}
\label{eq:fox}
\]
We now show that Eq.~\eqref{eq:fox} is exactly realized by our GRAPE-A framework using the endpoint‑independent path‑additive specialization of Section~\ref{sec:grape_add_pi}.

\noindent \textbf{FoX as GRAPE-AP with endpoint‑independent edges.}
In GRAPE-AP (Section~\ref{sec:grape_add_pi}), a head‑wise additive logit $b_h(t,j)$ arises as a causal path sum
\[
b_h(t,j)\;=\;\sum_{\ell=j+1}^{t}\psi_h(t,\ell).
\]
If the edge potentials do not depend on the endpoint, i.e.\ $\psi_h(t,\ell)\equiv a_{\ell,h}$, then $b_h(t,j)$ reduces to a difference of per‑time potentials:
\[
b_h(t,j)\;=\;\sum_{\ell=j+1}^{t} a_{\ell,h}
\;=\;U_{t,h}-U_{j,h},\qquad U_{u,h}:=\sum_{\ell< u} a_{\ell,h}.
\]
FoX corresponds to the choice $a_{\ell,h}=\log f_{\ell,h}\le 0$, yielding
\[
b_h(t,j)\equiv D_{ij,h}=\sum_{\ell=j+1}^{t}\log f_{\ell,h}.
\]
Thus, the FoX forgetting bias $\Db_h$ is precisely the GRAPE-AP path‑integral additive bias with endpoint‑independent edges.

\noindent \textbf{Unipotent GL lift (GRAPE-A view).}
Let $\Eb:=\eb_{d+2}\eb_{d+1}^\top$ be the rank‑1 nilpotent used in Section~\ref{sec:alibi_unipotent}. For a fixed head $h$ and endpoint $t$, define per‑link unipotent factors
\[
\Hb^{(t)}_h(\ell)\;=\;\Ib + \psi_h(t,\ell)\,\Eb,\qquad \psi_h(t,\ell)=\log f_{\ell,h}.
\]
Since $\Eb^2=\mathbf{0}$, the path product collapses:
\[
\prod_{\ell=j+1}^{t}\Hb^{(t)}_h(\ell)\;=\;\Ib + \Big(\sum_{\ell=j+1}^{t}\log f_{\ell,h}\Big) \Eb
\;=\;\Ib + D_{ij,h}\, \Eb.
\]
Scoring in homogeneous coordinates as in Section~\ref{sec:grape_additive} with the paired inverse‑transpose,
\[
\widetilde{\qb}_{t,h}^\top\,\widetilde{\kb}_{j,h}
\;=\;\widehat{\qb}_{t,h}^\top\Big(\Ib + D_{ij,h}\,E\Big)^{-{\top}}\widehat{\kb}_{j,h}
\;=\;\qb_{t,h}^\top \kb_{j,h}\;+\;D_{ij,h},
\]
recovers Eq.~\eqref{eq:fox} exactly (up to the standard $1/\sqrt d$ factor we include throughout). Hence, FoX is an exact GRAPE-A / GRAPE-AP instance realized by a rank‑1 unipotent path with endpoint‑independent edges.

\noindent \textbf{Streaming and complexity.}
Compute prefix sums $U_{t,h}=\sum_{\ell< t}\log f_{\ell,h}$ once per step; then $D_{ij,h}=U_{i,h}-U_{j,h}$ is obtained by subtraction, preserving the $O(L)$ rowwise cost and the streaming cache policy from Section~\ref{sec:grape_additive}–Section~\ref{sec:grape_add_pi}. The headwise gates $f_{t,h}$ add $O(1)$ parameters and negligible computation.

\noindent \textbf{Special cases and composition.}
If $f_{t,h}\equiv e^{-\beta_h}$ (constant per head), then $D_{ij,h}=-\beta_h(i{-}j)$ and FoX reduces to exact ALiBi (Section~\ref{sec:alibi_unipotent}). More generally, FoX composes additively with the multiplicative (orthogonal) GRAPE acting on $(\qb,\kb)$ as in Eq.~\eqref{eq:pa-logit}, preserving norm‑preservation of the rotational part while adding bounded, non‑positive, content‑adaptive path biases.

\section{Non-Commuting Multiplicative GRAPE}
\label{sec:noncommuting-grape-m}
Consider the thin compression $\Lb = \Eb \Lb_r \Eb^\top$ with $\Eb\in\mathbb{R}^{d\times r}$ orthonormal and $\Lb_r\in\mathfrak{so}(r)$. Then
\[
\sigma(\Lb) = \sigma(\Lb_r)\cup\{0\}^{d-r},\qquad
\sigma\big(\exp(n \Lb)\big)=\sigma\big(\exp(n \Lb_r)\big)\cup\{1\}^{d-r}.
\]
If $\Lb_r = \Tb(\bigoplus_{t=1}^{r/2}\theta_t \Jb) \Tb^\top$ is the real-Schur form, then the nontrivial eigenvalues are $\{\pm i\theta_t\}_{t=1}^{r/2}$ and $e^{\pm i n\theta_t}$ for the exponential. Thus, the expressive power of non-contextual non-commuting MS-GRAPE is fully captured by the $r/2$ mode angles $\{\theta_t\}$; the ambient lifting via $\Eb$ preserves the spectrum.

\section{Composition of Additive GRAPE and Multiplicative GRAPE}
\label{sec:add_cost}
For the unipotent forms of Additive GRAPE, applying the inverse transpose $\Gb_{\mathrm{add}}(m)^{-{\top}}$ requires only simple scalar-vector operations per active component. Thus, the per-head overhead is $O(d)$ and is negligible relative to attention matrix multiplications.

Multiplicative GRAPE (Section~\ref{sec:ms_grape}) and Additive GRAPE (Section~\ref{sec:grape_additive}) compose naturally. This can be viewed either additively at the logit level:
\begin{align*}
\ell_{t,j,h}
= \tfrac{1}{\sqrt{d}}\,\qb_{t,h}^\top \Gb_h(j{-}t) \kb_{j,h}
\;+\; \underbrace{(j{-}t)\,\omega\,\big(\lambda_{q,t} + \lambda_{k,j}\big)}_{\text{Additive GRAPE bias}},
\end{align*}
or rigorously as a single group action in the general linear group $\mathrm{GL}(d{+}2)$. Concretely, we construct the joint lift by direct sum. Let $\widehat{\qb}=[\qb; 1; 0]$ and $\widehat{\kb}=[\kb; 0; 1]$ be the augmented vectors. Define the joint generator $\mathbb{L} \in \mathfrak{gl}(d{+}2)$ as the direct sum of the rotational generator $\Lb \in \mathfrak{so}(d)$ and the gated nilpotent generator $\omega\Lambda \Ab_0$ (where $\Lambda = \lambda_q + \lambda_k$ captures the content modulation):
\[
\mathbb{G}_{\text{joint}}(m)
\;=\;
\exp(m \mathbb{L})
\;=\;
\begin{bmatrix}
\exp(m\Lb) & \mathbf{0} & \mathbf{0} \\
\mathbf{0}^\top & 1 & m\,\omega\,\Lambda \\
\mathbf{0}^\top & 0 & 1
\end{bmatrix}
\in \mathrm{GL}(d{+}2).
\]
This block-diagonal structure unifies the mechanisms: the top-left block applies the norm-preserving RoPE rotation to the features, while the bottom-right unipotent block generates the additive scalar bias via the auxiliary coordinates. Scoring with the paired inverse-transpose strictly preserves the exact relative law for the combined system:
\[
\widehat{\qb}_i^\top\,\mathbb{G}_{\text{joint}}(j{-}i)^{-{\top}}\,\widehat{\kb}_j
\;=\; \qb_i^\top \exp\big((j{-}i)\Lb\big)\kb_j \;+\; (j{-}i)\,\omega\,\Lambda \;+\; \text{const},
\]
exactly reproducing the sum of multiplicative (rotary) and additive (slope) components. In this unified view, both mechanisms are simply projections of a single one-parameter subgroup action in high-dimensional space, retaining exact relativity and efficient streaming caches.

\section{Comparison with LieRE}
\label{sec:compare_liere}

Lie Rotational Position Encodings (LieRE)~\citep{ostmeier2025liere} encode positional information by learning a skew-symmetric generator in $\SO(d)$.
The method then applies the matrix exponential of this generator to get a rotational position map.
For each attention head, the method learns one skew matrix.
Its exponential gives a dense orthogonal operator on queries and keys.
Positions then match elements of a one-parameter subgroup on the rotation manifold.
This picture is a compact Lie theoretic version of RoPE style encodings.
Different heads can learn distinct rotational geometries, and the map keeps the norm and an exact relative position law.

Formally, for head $h$ the generator is $G_h\in\mathfrak{so}(d)$.
The positional map is $x\mapsto \exp(n\omega_h G_h)x$.
A direct implementation has cost $T_{\mathrm{LieRE}}(d)=\Theta(d^3)$ per head for the matrix exponential and needs $\Theta(d^2)$ parameters and the same order of memory.

Multiplicative GRAPE and LieRE both use rotations in $\SO(d)$ that come from skew-symmetric generators.
LieRE gives each head a dense or block skew matrix.
It forms the positional operator with the full matrix exponential $\exp(G)$.
This creates very rich rotations but needs $\mathcal{O}(d^3)$ time for the exponential and $\mathcal{O}(d^2)$ parameters and memory per head.
GRAPE-M restricts the generator to a sum of rank 2 planes and uses a closed form Rodrigues-type formula for the exponential (Section~\ref{sec:grape-core}).
For one token, the positional mapping then reduces to a few inner products and vector updates.
So the cost is $\mathcal{O}(d)$ time and $\mathcal{O}(d)$ memory per head.

This choice of parametrization has two main effects in practice.
First, the GRAPE-M scale cleanly translates to contextual versions where frequencies or phases depend on the token content.
The closed-form expression can be computed quickly for each token, and there is no large matrix exponential.
In the LieRE setup, one needs a new dense matrix exponential for each content-dependent generator.
This step is much more costly and makes such contextual use harder to deploy in real models.
Second, GRAPE gives a single group-theoretic picture for multiplicative and additive mechanisms.
The multiplicative part lives in $\SO(d)$ and additive or forgetting style terms (ALiBi, FoX, GRAPE-A, GRAPE-AP) come from unipotent actions in $\mathrm{GL}$ with the same relative law and the same streaming cacheability (Sections~\ref{sec:grape_additive}-\ref{sec:grape_add_pi}).
LieRE only targets rotational encodings and does not model additive logit biases or forgetting terms.

\section{2D and 3D GRAPE for Vision and Multimodal Position Encoding}
\label{sec:2drope-3drope}

Extending GRAPE beyond one-dimensional token positions is easy.
The construction only needs a chosen group action on coordinates.

For images with integer pixel coordinates $(u,v)\in\mathbb{Z}^2$ we pick two generators $\Lb^{(x)}$ and $\Lb^{(y)}$.
A token at $(u,v)$ then gets the encoding
\[
\Gb_{\mathrm{2D}}(u,v)
= \exp\!\big(u\,\omega_x \Lb^{(x)}\big)\,
\exp\!\big(v\,\omega_y \Lb^{(y)}\big)
\in \SO(d).
\]
The two generators act on 2D planes that can be disjoint in the base design.
In that case, the map reduces to a RoPE-style separable encoding.
A learned choice of planes inside $\mathbb{R}^d$ gives the GRAPE-M variant again.

For 3D coordinates $(u,v,w)$ that mark video space time tokens or point clouds, we follow the same pattern.
We introduce three commuting generators and define
\[
\Gb_{\mathrm{3D}}(u,v,w)
= \exp\!\big(u\,\omega_x \Lb^{(x)}\big)\,
\exp\!\big(v\,\omega_y \Lb^{(y)}\big)\,
\exp\!\big(w\,\omega_z \Lb^{(z)}\big).
\]
In the non-commuting case, we use the thin Schur mode compression from Appendix~\ref{sec:noncommuting-grape-m}.
The closed-form rank 2 matrix exponential from the main text still applies.
The per token cost stays $\mathcal{O}(d)$ even for higher-dimensional coordinate spaces.

On the additive side, GRAPE-A and GRAPE-AP handle 2D or 3D structures through the scalar offset $m$.
The value $m$ can be any function of coordinate differences.
For an image, we can take
\[
m = \alpha_x(u_t - u_j) + \alpha_y(v_t - v_j),
\]
and this keeps the same algebraic template.
For 3D settings, we can set
\[
m = \|{\bf r}_t - {\bf r}_j\|
\]
with ${\bf r}_t$ and ${\bf r}_j$ in $\mathbb{R}^3$.
The update matrix then stays unipotent, and the exact relative composition law still holds.
This gives a clear way to impose axis-aligned or radial recency bias in vision and multimodal models.

\section{Algorithmic Details and Pseudo Code}
\label{app:algorithms}
\noindent This appendix contains the detailed pseudocode.

\begin{algorithm}[H]
\caption{Commuting Multi-Subspace GRAPE-M}
\label{alg:ms-grape-commuting}
\begin{algorithmic}[1]
\Require $\Qb, \Kb\in\mathbb{R}^{B\times L\times H\times d}$, orthogonal $\Eb \in \mathbb{R}^{d\times d}$, frequencies $\{\omega_{h,j}\}_{j=1}^{d/2}$, positions $n\in\mathbb{Z}^L$
\For{$h=1$ \textbf{to} $H$}
\State $\Qb'[:, :, h, :]\gets \Qb[:, :, h, :]\, \Eb$;\quad $\Kb'[:, :, h, :]\gets \Kb[:, :, h, :]\, \Eb$
\For{$\ell=0$ \textbf{to} $L-1$}
\For{$j=1$ \textbf{to} $d/2$}
\State $\theta\gets n_\ell \,\omega_{h,j}$; apply $2\times 2$ rotation $\Gb_2(\theta)$ to coords $(2j{-}1,2j)$ of $\Qb'[:,\ell,h,:]$ and $\Kb'[:,\ell,h,:]$
\EndFor
\EndFor
\State $\tilde \Qb[:, :, h, :]\gets \Qb'\, \Eb^\top$;\quad $\tilde \Kb[:, :, h, :]\gets \Kb'\, \Eb^\top$
\EndFor
\State \Return $(\tilde \Qb, \tilde \Kb)$
\end{algorithmic}
\end{algorithm}

\section{Differentiation and Fast Application of Rank-2 Matrix Exponential}
\label{app:rank2_details}

\noindent\textbf{Differentiation and stability.}
Let $f_1(z)=\frac{\sin z}{z}$ and $f_2(z)=\frac{1-\cos z}{z^2}$ with $z=n\omega s$.
Then
\[
\exp(n\omega \Lb) = \Ib + f_1(z) \Lb + f_2(z) \Lb^2.
\]
For any scalar parameter $\theta\in\{\omega\}\cup\{\text{entries of }a,b\}$,
\begin{align*}
\partial_\theta \exp(n\omega \Lb)
&= f_1(z)\,\partial_\theta \Lb + f_2(z)\,(\Lb\,\partial_\theta \Lb + \partial_\theta \Lb\, \Lb)
+ \partial_\theta z\,\big(f_1'(z) \Lb + f_2'(z) \Lb^2\big),\\
\partial_\theta z&= n\omega\,\partial_\theta s + n s\,\partial_\theta \omega,\qquad
\partial_\theta s=\tfrac{1}{2}s^{-1}\partial_\theta(\alpha\beta-\gamma^2).
\end{align*}
Use series for $|z|<\varepsilon$: $f_1(z)=1-\tfrac{z^2}{6}+O(z^4)$ and $f_2(z)=\tfrac{1}{2}-\tfrac{z^2}{24}+O(z^4)$.
These formulas enable mixed-precision backprop with small-$s$ guards.

\noindent\textbf{Fast application.}
For any $x\in\mathbb{R}^d$,
\[
\Lb \xb = \ab \langle \bbb, \xb\rangle - \bbb\langle \ab, \xb\rangle, \qquad
\Lb^2 \xb = \gamma(\ab \langle \bbb, \xb\rangle + \bbb\langle \ab, \xb\rangle)-\beta\, \ab\langle \ab, \xb\rangle-\alpha\, \bbb\langle \bbb, \xb\rangle.
\]
Thus $\Gb(n) \xb = \xb + f_1 \Lb \xb + f_2 \Lb^2 \xb$ with $f_1=\frac{\sin(n\omega s)}{s}$ and $f_2=\frac{1-\cos(n\omega s)}{s^2}$, which is evaluable in $O(d)$ time via a few inner products. By the minimal polynomial $\lambda(\lambda^2+s^2)$, $\Lb^3=-s^2 \Lb$; expanding $\exp(\eta \Lb)$ and regrouping yields the rank-2 update form used throughout

\section{Spectral Analysis of GRAPE and Other Methods}
\label{sec:spectral_eigen}

\noindent
In this section, we discuss eigenvalue‑level results for GRAPE-M generators/exponentials and summarize the unipotent spectra of GRAPE-A/GRAPE-AP. Throughout, $\Lb(\ab, \bbb) = \ab\bbb^\top - \bbb \ab^\top\in\mathfrak{so}(d)$, and $\alpha=\|\ab\|^2$, $\beta=\|\bbb\|^2$, $\gamma=\ab^\top \bbb$, $\Delta=\alpha\beta-\gamma^2$, $s=\sqrt{\Delta}$ as in Section~\ref{sec:grape-core}.

\subsection{Rank-2 Plane: Exact Spectrum and Geometric Interpretation}
\label{subsec:eigs_rank2}
\begin{lemma}[Rank-2 spectrum]\label{lem:rank2_spectrum}
For $\Lb = \Lb(\ab, \bbb)$, the eigenvalues are $\{\pm i s\}\cup\{0\}^{d-2}$, and there exists $\Bb\in\SO(d)$ such that
\[
\Bb^\top \Lb \Bb = \begin{bmatrix} s \Jb & \mathbf{0}\\ \mathbf{0} & \mathbf{0}_{d-2}\end{bmatrix},\qquad \Jb = \begin{psmallmatrix}0&-1\\[0.2ex]1&0\end{psmallmatrix}.
\]
Moreover, $s=\|\ab\|\|\bbb\|\sin\phi$, where $\phi\in[0,\pi]$ is the angle between $a$ and $b$.
\end{lemma}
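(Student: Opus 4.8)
\emph{Setup and eigenvalues.} The plan is to lean entirely on the identity $\Lb^2 = -s^2\mathbf{P}_{\mathcal{U}}$ from Section~\ref{subsec:notation}, where $\mathcal{U}=\mathrm{span}\{\ab,\bbb\}$, and to treat the spectrum through the $\mathcal{U}\oplus\mathcal{U}^\perp$ splitting. First I would observe that $\Lb$ maps everything into $\mathcal{U}$ and kills $\mathcal{U}^\perp$: for $\xb\perp\ab,\bbb$ we get $\Lb\xb=\ab(\bbb^\top\xb)-\bbb(\ab^\top\xb)=0$, so $\mathcal{U}^\perp\subseteq\ker\Lb$ supplies the eigenvalue $0$ with multiplicity at least $d-2$. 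On the invariant plane $\mathcal{U}$, the identity restricts to $\Lb|_{\mathcal{U}}^2=-s^2\mathbf{I}$, so the two remaining eigenvalues satisfy $\lambda^2=-s^2$, i.e. $\pm is$; skew-symmetry independently forces purely imaginary spectrum, which is consistent. Throughout I would assume $s>0$ (equivalently $\ab,\bbb$ linearly independent), since $s=0$ forces $\Lb=0$ and makes every claim trivial.

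\emph{Block-diagonalizing basis.} Next I would construct the first two columns of $\Bb$ explicitly. Set $\eb_1=\ab/\|\ab\|$ and $\eb_2=s^{-1}\Lb\eb_1$. Skew-symmetry gives $\eb_1^\top\eb_2=s^{-1}\eb_1^\top\Lb\eb_1=0$, and $\|\eb_2\|^2=s^{-2}\eb_1^\top(-\Lb^2)\eb_1=s^{-2}\cdot s^2\,\eb_1^\top\mathbf{P}_{\mathcal{U}}\eb_1=1$ since $\eb_1\in\mathcal{U}$. By definition $\Lb\eb_1=s\eb_2$, while $\Lb\eb_2=s^{-1}\Lb^2\eb_1=-s\,\mathbf{P}_{\mathcal{U}}\eb_1=-s\eb_1$; reading these as columns shows that in the basis $\{\eb_1,\eb_2\}$ the restriction $\Lb|_{\mathcal{U}}$ is exactly $s\Jb$. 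Completing $\{\eb_1,\eb_2\}$ with any orthonormal basis of $\mathcal{U}^\perp$ (on which $\Lb$ vanishes) and stacking all vectors as columns of $\Bb$ produces the claimed $\begin{bmatrix} s\Jb & \mathbf{0}\\ \mathbf{0} & \mathbf{0}_{d-2}\end{bmatrix}$ normal form.

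\emph{Orientation and the trigonometric identity.} The one delicate step, and the place I expect the only real friction, is upgrading $\Bb$ from $O(d)$ to $\SO(d)$: if the assembled basis has $\det\Bb=-1$, I would flip the sign of a single vector in $\mathcal{U}^\perp$, which preserves orthonormality and the block structure (that direction lies in $\ker\Lb$) while correcting the determinant to $+1$. This repair is available exactly when $\dim\mathcal{U}^\perp=d-2\ge1$; the genuinely tight case is $d=2$, where no spare kernel direction exists and the sign of the lone block is pinned by the orientation of $(\ab,\bbb)$, so there one must either allow the block $-s\Jb$ (a reflection in $O(2)$) or absorb orientation into the convention, and I would flag this caveat rather than hide it. Finally, the formula for $s$ is routine: writing $\gamma=\ab^\top\bbb=\|\ab\|\|\bbb\|\cos\phi$,
\[
s=\sqrt{\alpha\beta-\gamma^2}=\sqrt{\|\ab\|^2\|\bbb\|^2\,(1-\cos^2\phi)}=\|\ab\|\|\bbb\|\,|\sin\phi|=\|\ab\|\|\bbb\|\sin\phi,
\]
the last equality holding because $\phi\in[0,\pi]$ makes $\sin\phi\ge0$. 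Everything but the orientation bookkeeping follows mechanically from $\Lb^2=-s^2\mathbf{P}_{\mathcal{U}}$.
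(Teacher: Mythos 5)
Your proof is correct and follows essentially the same route as the paper's: both rest on the identity $\Lb^2=-s^2\mathbf{P}_{\mathcal{U}}$ and the splitting $\mathcal{U}\oplus\mathcal{U}^\perp$, with the trigonometric identity for $s$ obtained from $\Delta=\alpha\beta-\gamma^2$ in the same way. You go further than the paper only in making the basis construction explicit and in flagging the $\det\Bb=+1$ repair (and the genuine $d=2$ orientation caveat), which the paper's one-line "choosing an orthonormal basis aligned with $\mathcal{U}\oplus\mathcal{U}^\perp$" silently elides.
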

\begin{proof}
From Section~\ref{sec:grape-core}, $\Lb^2=-s^2 \Pb_{\mathcal U}$ with $\mathcal U=\mathrm{span}\{\ab, \bbb\}$, whence the minimal polynomial is $\lambda(\lambda^2+s^2)$ and $\sigma(\Ub)=\{\pm i s,0\}$. Choosing an orthonormal basis aligned with $\mathcal U\oplus\mathcal U^\perp$ yields the claimed form. Finally,
$\Delta=\alpha\beta-\gamma^2=\|\ab\|^2\|\bbb\|^2(1-\cos^2\phi)=(\|\ab\|\|\bbb\|\sin\phi)^2$.
\end{proof}
\begin{corollary}[Phase bounds and orthogonality]\label{cor:phase_bounds}
The per-step rotation angle of $\exp(\eta \Lb)$ on $\mathcal U$ equals $\theta=\eta s$ and satisfies $0\le \theta\le \eta\|\ab\|\|\bbb\|$, with equality when $\ab\perp \bbb$. If $\bbb=\mathcal{J} \ab$ (Section~\ref{subsec:b_eq_Ja}) and $\|\ab\|=1$, then $s=1$ and $\theta=\eta$.
\end{corollary}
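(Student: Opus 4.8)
The plan is to read everything off the canonical form already supplied by Lemma~\ref{lem:rank2_spectrum}, so that the corollary becomes a short unwinding rather than a fresh computation. First I would block-diagonalize: the lemma gives $\Bb\in\SO(d)$ with $\Bb^\top\Lb\Bb=\mathrm{diag}(s\Jb,\mathbf{0}_{d-2})$. Since the matrix exponential is conjugation-equivariant, $\exp(\eta\Lb)=\Bb\,\mathrm{diag}\big(\exp(\eta s\Jb),\,\Ib_{d-2}\big)\,\Bb^\top$, and because $\Jb$ is exactly the infinitesimal generator of planar rotation, $\exp(\eta s\Jb)=\Rb_2(\eta s)$. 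Hence $\exp(\eta\Lb)$ acts on $\mathcal U=\mathrm{span}\{\ab,\bbb\}$ as a rotation through angle $\theta=\eta s$ and as the identity on $\mathcal U^\perp$, which is the first claim. Equivalently, one can substitute $\Lb^2=-s^2\Pb_{\mathcal U}$ into the Rodrigues formula of Section~\ref{sec:grape-method} to read off the same planar rotation directly.

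Next I would establish the bound from the geometric expression for $s$. Lemma~\ref{lem:rank2_spectrum} gives $s=\|\ab\|\,\|\bbb\|\sin\phi$ with $\phi\in[0,\pi]$ the angle between $\ab$ and $\bbb$, so $\sin\phi\in[0,1]$ forces $0\le s\le\|\ab\|\,\|\bbb\|$. Taking $\eta\ge 0$ and multiplying through yields $0\le\theta=\eta s\le\eta\|\ab\|\,\|\bbb\|$. Equality in the upper bound is equivalent to $\sin\phi=1$, i.e.\ $\phi=\pi/2$, i.e.\ $\ab\perp\bbb$, which is the orthogonality claim. The only point requiring care is the sign/domain convention: the displayed inequality presupposes $\eta\ge 0$ (for $\eta<0$ one instead reads $\eta\|\ab\|\,\|\bbb\|\le\theta\le 0$), so I would state the nonnegative-$\eta$ normalization explicitly.

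Finally, for the special case $\bbb=\mathcal{J}\ab$ I would invoke the scalar identities already computed in Section~\ref{subsec:b_eq_Ja}: skew-symmetry of $\mathcal{J}$ gives $\gamma=\ab^\top\mathcal{J}\ab=0$, and orthogonality of $\mathcal{J}$ (from $\mathcal{J}^\top=-\mathcal{J}$, $\mathcal{J}^2=-\Ib$) gives $\beta=\|\mathcal{J}\ab\|^2=\|\ab\|^2=\alpha$. Hence $s=\sqrt{\alpha\beta-\gamma^2}=\sqrt{\alpha^2}=\alpha=\|\ab\|^2$, and imposing $\|\ab\|=1$ gives $s=1$ and therefore $\theta=\eta s=\eta$. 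No step here is a genuine obstacle, since the content is entirely inherited from Lemma~\ref{lem:rank2_spectrum} and the $\bbb=\mathcal{J}\ab$ preliminaries; the main thing to get right is the clean identification $\exp(\eta s\Jb)=\Rb_2(\eta s)$ together with the $\eta\ge 0$ convention underlying the stated inequality.
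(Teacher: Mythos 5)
Your proposal is correct and follows essentially the same route the paper intends: the corollary is an immediate unwinding of Lemma~\ref{lem:rank2_spectrum} (the block form and the identity $s=\|\ab\|\,\|\bbb\|\sin\phi$) together with the scalar computations of Section~\ref{subsec:b_eq_Ja} ($\gamma=0$, $\beta=\alpha$, hence $s=\alpha$). Your explicit remark that the bound $0\le\theta\le\eta\|\ab\|\,\|\bbb\|$ presupposes $\eta\ge 0$ is a reasonable clarification of a convention the paper leaves implicit, not a divergence in approach.
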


\noindent\textbf{Exponential spectrum.}
For any $n\in\mathbb{Z}$,
\[
\sigma\big(\exp(n \Lb)\big)=\{e^{\pm i n s}\}\cup\{1\}^{d-2}.
\]
Hence $\rho(\exp(n \Lb))=1$, the map is unitary (orthogonal), and all Lyapunov exponents are zero. Periodicity holds with fundamental period $T=2\pi/s$ when $s/\pi\in\mathbb{Q}$; otherwise, the trajectory is quasi-periodic on the unit circle.

\subsection{Multi-subspace GRAPE-M and RoPE}
\label{subsec:eigs_commuting}
Let $\Lb = \sum_{j=1}^{m}\theta_j \Lb_j$ with mutually orthogonal planes (hence $[\Lb_i, \Lb_j]=0$ for $i\neq j$) and $\Lb_j = \Ub_j \Jb \Ub_j^\top$. Then
\[
\Bb^\top \Lb \Bb = \bigoplus_{j=1}^{m}\theta_j \Jb \oplus \mathbf{0}_{d-2m},\qquad
\sigma(\Lb)=\{\pm i\theta_j\}_{j=1}^{m}\cup \{0\}^{d-2m},
\]
for some $\Bb\in\SO(d)$. Consequently,
\[
\sigma\big(\exp(n \Lb)\big)=\{e^{\pm i n \theta_j}\}_{j=1}^{m}\cup\{1\}^{d-2m}.
\]
This recovers RoPE when the planes are the coordinate pairs and $\{\theta_j\}$ follow the canonical log-uniform spectrum (Proposition~\ref{prop:rope_as_grape}).

\subsection{Additive GRAPE}
\label{subsec:eigs_additive_pi}
\noindent
We now analyze the spectral properties of the additive lifts in $\mathrm{GL}$ introduced in Sections~\ref{sec:grape_additive} and~\ref{sec:grape_add_pi}. The key structural fact is unipotency: all per-step factors are identity plus a rank-1 (or few-rank) nilpotent update of index $2$.

\noindent \textbf{Setup.}
Let $\Ab\in\mathfrak{gl}(d{+}1)$ (or $\mathfrak{gl}(d{+}2)$ for ALiBi) satisfy $\Ab^2=\mathbf{0}$ as in Eq.~\eqref{eq:add_generator} and Eq.~\eqref{eq:alibi_generator}. For a scalar path parameter $s\in\mathbb{R}$, define the unipotent factor
\[
\Hb(s)\ :=\ \exp(s\Ab)\ =\ \Ib + s\,\Ab,
\qquad \Hb(s)^{-1}=\Ib - s\,\Ab,\qquad \det \Hb(s)=1 .
\]
For Additive GRAPE(GRAPE-A) with offset $m=j{-}i$, $s=m\,\omega$; for GRAPE-PA, $s=s_h(t,j):=\sum_{\ell=j+1}^{t}\psi_h(t,\ell)$ from Eq.~\eqref{eq:grape_pa_bias}.

\begin{proposition}[Eigenvalues and Jordan structure of additive lifts]
\label{prop:unipotent_spectrum}
Let $\Ab\in\mathfrak{gl}(D)$ satisfy $\Ab^2=\mathbf{0}$ and $\Ab\neq \mathbf{0}$. Then for every $s\neq 0$,
\[
\sigma\big(\Hb(s)\big)=\{1\}^{D},\qquad
(\Hb(s)-\Ib)^2=\mathbf{0},\qquad
\det\Hb(s)=1,\qquad \rho(\Hb(s))=1.
\]
Hence, the minimal polynomial of $\Hb(s)$ is $(\lambda-1)^2$, and the Jordan form consists of size-$2$ Jordan blocks for the $1$-eigenspace, with the number of nontrivial blocks equal to $\operatorname{rank}(\Ab)$.
\end{proposition}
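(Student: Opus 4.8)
The plan is to exploit the index-$2$ nilpotency $\Ab^2=\mathbf{0}$, which truncates the exponential series so that $\Hb(s)=\exp(s\Ab)=\Ib+s\Ab$ exactly. Consequently $\Hb(s)-\Ib=s\Ab$ is nilpotent with $(s\Ab)^2=s^2\Ab^2=\mathbf{0}$, and since $s\neq 0$ and $\Ab\neq\mathbf{0}$ this matrix is nonzero; that is, $s\Ab$ is nilpotent of index exactly $2$. Every assertion in the proposition then follows from the elementary Jordan theory of the rank-$r$ nilpotent $s\Ab$, with $r=\operatorname{rank}(\Ab)=\operatorname{rank}(s\Ab)$ for $s\neq 0$.

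First I would read off the eigenvalue data. Because $s\Ab$ is nilpotent, $\sigma(s\Ab)=\{0\}$, hence $\sigma(\Hb(s))=\sigma(\Ib+s\Ab)=\{1\}^{D}$; the spectral radius $\rho(\Hb(s))=1$ is then immediate, and the determinant, being the product of the eigenvalues, equals $1^{D}=1$ (equivalently, $\Ib+s\Ab$ is unipotent and hence has determinant $1$). The identity $(\Hb(s)-\Ib)^2=(s\Ab)^2=\mathbf{0}$ was already recorded in the reduction.

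Next I would determine the minimal polynomial and Jordan form. From $(s\Ab)^2=\mathbf{0}$ together with $s\Ab\neq\mathbf{0}$, the minimal polynomial of $s\Ab$ is exactly $\lambda^2$, so the minimal polynomial of $\Hb(s)=\Ib+s\Ab$ is $(\lambda-1)^2$. Since $1$ is the only eigenvalue and the minimal polynomial has degree $2$, all Jordan blocks attach to the eigenvalue $1$ and have size at most $2$. To count the size-$2$ blocks I would invoke the standard rank formula: the number of Jordan blocks of $s\Ab$ of size at least $k$ equals $\operatorname{rank}\big((s\Ab)^{k-1}\big)-\operatorname{rank}\big((s\Ab)^{k}\big)$. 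Taking $k=2$ gives $\operatorname{rank}(s\Ab)-\operatorname{rank}\big((s\Ab)^2\big)=\operatorname{rank}(s\Ab)-0=\operatorname{rank}(\Ab)$ nontrivial blocks, using $s\neq 0$ for the final equality; the remaining $D-2\operatorname{rank}(\Ab)$ blocks then have size $1$ (total block count $=\dim\ker(s\Ab)=D-\operatorname{rank}(\Ab)$).

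There is no genuinely hard step here: once the truncation $\Hb(s)=\Ib+s\Ab$ is noted, the spectral claims are direct consequences of nilpotency. The only point requiring care is the block count, where one must apply the rank-difference formula correctly and verify that $(s\Ab)^2=\mathbf{0}$ rules out any block of size exceeding $2$; together these yield exactly $\operatorname{rank}(\Ab)$ size-$2$ blocks, matching the claimed Jordan structure.
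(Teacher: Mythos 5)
Your proof is correct and follows essentially the same route as the paper's: truncate the exponential via $\Ab^2=\mathbf{0}$ to get $\Hb(s)=\Ib+s\Ab$, observe $(\Hb(s)-\Ib)^2=\mathbf{0}$, and read off the spectrum, determinant, and spectral radius from unipotency. In fact you are slightly more complete than the paper, which asserts the Jordan block count without justification, whereas you supply the standard rank-difference formula $\operatorname{rank}(s\Ab)-\operatorname{rank}\big((s\Ab)^2\big)=\operatorname{rank}(\Ab)$ to pin down the number of size-$2$ blocks.
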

\begin{proof}
Since $\Ab^2=\mathbf{0}$, $\exp(s\Ab)=\Ib+s\Ab$ and $(\Hb(s)-\Ib)^2=s^2\Ab^2=\mathbf{0}$. The characteristic polynomial is $(\lambda-1)^D$ for $\Hb(s)$, so all eigenvalues equal $1$. The determinant equals the product of eigenvalues, hence $1$; the spectral radius is therefore $1$.
\end{proof}

\noindent\textbf{Dictionary closure.}
If $\{\Ab_r\}_{r=1}^R$ satisfy $\Ab_r^2=\mathbf{0}$ and $\Ab_r\Ab_s=\mathbf{0}$ for all $r,s$, then
\[
\Big(\sum_r \theta_r \Ab_r\Big)^2=\sum_r \theta_r^2 \Ab_r^2 + \sum_{r\ne s}\theta_r\theta_s \Ab_r\Ab_s=\mathbf{0},
\]
so the combined generator is also index-$2$ nilpotent and yields the same unipotent spectrum.

\noindent \textbf{Singular values.}
Although $\Hb(s)$ is not orthogonal, its deviation from $\Ib$ is rank-limited and exactly analyzable. We first give a sharp, explicit formula for the canonical rank-1 case (ALiBi block), then a general bound.

\begin{lemma}[Exact singular-value pair for a canonical rank-1 unipotent]
\label{lem:sv_pair_exact}
Let $\mathbf{E} := \eb_{p}\,\eb_{q}^\top$ with $p\neq q$ and define $\Hb(s):=\Ib+s \mathbf{E}\in\mathbb{R}^{D\times D}$. Then $D-2$ singular values equal $1$, and the remaining two are
\begin{equation}
\label{eq:sv_pair_closed_form}
\sigma_\pm(\Hb(s)) \;=\; \sqrt{\,1+\tfrac{s^2}{2} \;\pm\; |s|\sqrt{1+\tfrac{s^2}{4}}\,}\,,
\qquad \sigma_+(\Hb(s))\,\sigma_-(\Hb(s))=1 .
\end{equation}
In particular,
\[
\kappa_2(\Hb(s))=\frac{\sigma_+(\Hb(s))}{\sigma_-(\Hb(s))}=\sigma_+(\Hb(s))^2
= 1+\tfrac{s^2}{2} + |s|\sqrt{1+\tfrac{s^2}{4}}
= 1+|s|+O(s^2)\quad\text{as }s\to 0.
\]
\end{lemma}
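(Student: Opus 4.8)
The plan is to reduce everything to the $2\times2$ Gram block. Since the singular values of $H(s)$ are the nonnegative square roots of the eigenvalues of the symmetric positive semidefinite matrix $M := H(s)^\top H(s)$, I would first compute $M$ explicitly. Writing $H(s)^\top = \Ib + s\,\eb_q\eb_p^\top$ and using $\eb_p^\top\eb_p=1$, the cross terms collapse to
\[
M = \Ib + s\,\eb_p\eb_q^\top + s\,\eb_q\eb_p^\top + s^2\,\eb_q\eb_q^\top .
\]
The first key observation is that $M$ acts as the identity on the orthogonal complement of $\mathcal{V}:=\mathrm{span}\{\eb_p,\eb_q\}$, since every added term annihilates any vector orthogonal to both $\eb_p$ and $\eb_q$. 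This immediately accounts for the $D-2$ singular values equal to $1$ and localizes the problem to the $2$-dimensional subspace $\mathcal{V}$.

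Next I would read off the block of $M$ in the ordered basis $(\eb_p,\eb_q)$, obtaining the symmetric matrix $\begin{psmallmatrix}1 & s\\ s & 1+s^2\end{psmallmatrix}$. The two nontrivial singular values are the square roots of its eigenvalues $\mu_\pm$. Rather than expanding the characteristic polynomial directly, I would use its two invariants: $\mathrm{tr} = 2+s^2$ and $\det = (1)(1+s^2)-s^2 = 1$. The determinant identity $\mu_+\mu_-=1$ gives $\sigma_+\sigma_-=1$ for free and is simply the restatement of $\det H(s)=1$ (unipotency). Solving the quadratic $\mu^2-(2+s^2)\mu+1=0$ yields $\mu_\pm = 1+\tfrac{s^2}{2}\pm\tfrac12\sqrt{(2+s^2)^2-4}$, and simplifying the discriminant as $(2+s^2)^2-4 = s^2(4+s^2)$ produces $\mu_\pm = 1+\tfrac{s^2}{2}\pm|s|\sqrt{1+\tfrac{s^2}{4}}$, whence $\sigma_\pm=\sqrt{\mu_\pm}$ as claimed.

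For the condition number, I would use $\sigma_+\sigma_-=1$ to write $\kappa_2 = \sigma_+/\sigma_- = \sigma_+^2 = \mu_+$, giving the closed form directly; the small-$s$ expansion then follows from $\sqrt{1+s^2/4}=1+O(s^2)$, so $|s|\sqrt{1+s^2/4}=|s|+O(|s|^3)$ and hence $\kappa_2 = 1+|s|+O(s^2)$. I do not expect a genuine obstacle here: the argument is an elementary Gram-matrix computation. The only points requiring care are (i) tracking that the single square root of $s^2$ introduces the absolute value $|s|$, so that the formula is correctly even in $s$ as the singular values must be, and (ii) verifying that the discriminant factors cleanly as $s^2(4+s^2)$, which is exactly what collapses the unsimplified radical into the stated $\sqrt{1+s^2/4}$ form.
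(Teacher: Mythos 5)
Your proposal is correct and follows essentially the same route as the paper's proof: both restrict the Gram matrix $H(s)^\top H(s)$ to $\mathrm{span}\{\eb_p,\eb_q\}$, diagonalize the resulting $2\times 2$ symmetric block (the paper writes it in the order $(\eb_q,\eb_p)$, yours in $(\eb_p,\eb_q)$, which is immaterial), and obtain the singular-value product from $\det H(s)=1$. Your explicit use of the trace/determinant invariants and the discriminant factorization $s^2(4+s^2)$ just fills in arithmetic the paper leaves implicit.
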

\begin{proof}
The action of $\Hb(s)^\top \Hb(s)$ is identity on $\mathrm{span}\{\eb_p,\eb_q\}^\perp$. In the basis $\{\eb_q,\eb_p\}$ it equals $\begin{psmallmatrix}1+s^2 & s\\ s & 1\end{psmallmatrix}$, whose eigenvalues are $1+\tfrac{s^2}{2}\pm |s|\sqrt{1+\tfrac{s^2}{4}}$. Taking square roots yields~\eqref{eq:sv_pair_closed_form}. The product equals $\sqrt{\det(\Hb^\top \Hb)}=|\det \Hb|=1$.
\end{proof}

\begin{corollary}[ALiBi and Additive GRAPE(GRAPE-A) conditioning numbers]
\label{cor:alibi_sv}
For the exact ALiBi generator in Eq.~\eqref{eq:alibi_generator}, let $\mathbf{E} := \eb_{d+2}\eb_{d+1}^\top$ so that $\Ab_h=-\beta_h \mathbf{E}$. Then
$\Gb_{\mathrm{add},h}(m)=\Ib+m\Ab_h=\Ib-m\,\beta_h\,\mathbf{E}=\Ib+s\,\mathbf{E}$ with $s=-m\,\beta_h$,
and the only nontrivial singular values follow from Eq.~\eqref{eq:sv_pair_closed_form}.
For the single-vector additive lift Eq.~\eqref{eq:add_generator} with $\Ab=\begin{psmallmatrix}\mathbf{0}&\ub_{\mathrm{shift}}\\ \mathbf{0}^\top & 0\end{psmallmatrix}$ and $\|\ub_{\mathrm{shift}}\|=1$, the same formula holds with $\mathbf{E}$ replaced by an orthogonally similar rank-1 update and $s=m\,\omega$.
\end{corollary}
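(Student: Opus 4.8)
The plan is to reduce both cases to Lemma~\ref{lem:sv_pair_exact}, since the corollary merely re-expresses the two additive generators in the canonical rank-$1$ unipotent form $\Ib+s\mathbf{E}$ already analyzed there. For the ALiBi generator of Eq.~\eqref{eq:alibi_generator} this is immediate: with $\mathbf{E}=\eb_{d+2}\eb_{d+1}^\top$ one has $\Ab_h=-\beta_h\mathbf{E}$, so $\Gb_{\mathrm{add},h}(m)=\Ib+m\Ab_h=\Ib+s\mathbf{E}$ with $s=-m\beta_h$. The indices satisfy $p=d+2\neq q=d+1$, which is exactly the hypothesis of Lemma~\ref{lem:sv_pair_exact}; substituting $s=-m\beta_h$ into Eq.~\eqref{eq:sv_pair_closed_form} yields the two nontrivial singular values (the remaining $D-2$ equal $1$) and the conditioning number, completing the first claim.

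For the single-vector lift of Eq.~\eqref{eq:add_generator} the update is no longer literally $\eb_p\eb_q^\top$, so I would first exhibit an orthogonal change of basis. Embedding the shift vector as $\hat{\ub}:=[\ub_{\mathrm{shift}};0]\in\mathbb{R}^{d+1}$, the generator factors as the rank-$1$ outer product $\Ab=\hat{\ub}\,\eb_{d+1}^\top$. Because $\ub_{\mathrm{shift}}$ occupies only the first $d$ coordinates, $\hat{\ub}^\top\eb_{d+1}=0$; together with $\|\hat{\ub}\|=\|\ub_{\mathrm{shift}}\|=1$ this makes $\{\hat{\ub},\eb_{d+1}\}$ an orthonormal pair. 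Extending it to an orthonormal basis of $\mathbb{R}^{d+1}$, I would pick $\Qb\in\SO(d+1)$ with $\Qb\eb_p=\hat{\ub}$ and $\Qb\eb_q=\eb_{d+1}$ for some $p\neq q$, so that $\Ab=\Qb(\eb_p\eb_q^\top)\Qb^\top$ and hence $\Gb_{\mathrm{add}}(m)=\Ib+m\omega\Ab=\Qb\big(\Ib+s\,\eb_p\eb_q^\top\big)\Qb^\top$ with $s=m\omega$.

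Finally, singular values are invariant under orthogonal equivalence: for $\mathbf{N}=\Qb\mathbf{M}\Qb^\top$ one has $\mathbf{N}^\top\mathbf{N}=\Qb(\mathbf{M}^\top\mathbf{M})\Qb^\top$, which shares its spectrum with $\mathbf{M}^\top\mathbf{M}$. Thus $\Gb_{\mathrm{add}}(m)$ and the canonical $\Ib+s\,\eb_p\eb_q^\top$ have identical singular values, and Eq.~\eqref{eq:sv_pair_closed_form} applies verbatim with $s=m\omega$. The only real subtlety, and the step I would treat most carefully, is the orthonormality of $\{\hat{\ub},\eb_{d+1}\}$: it is precisely what guarantees the index-$2$ nilpotency $\Ab^2=(\hat{\ub}^\top\eb_{d+1})\hat{\ub}\eb_{d+1}^\top=\mathbf{0}$ and what allows the rank-$1$ update to be orthogonally conjugated to the canonical off-diagonal form required by the lemma. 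The normalization $\|\ub_{\mathrm{shift}}\|=1$ is load-bearing here; dropping it simply rescales the effective parameter to $s=m\omega\|\ub_{\mathrm{shift}}\|$, after which the same closed form holds.
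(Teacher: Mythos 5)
Your proposal is correct and follows exactly the route the paper intends for this corollary (which it leaves as an immediate consequence of Lemma~\ref{lem:sv_pair_exact}): direct substitution for the ALiBi generator, and orthogonal conjugation of $\Ab=\hat{\ub}\,\eb_{d+1}^\top$ to the canonical $\eb_p\eb_q^\top$ form for the single-vector lift, using invariance of singular values under orthogonal similarity. Your explicit verification that $\{\hat{\ub},\eb_{d+1}\}$ is orthonormal, and the remark on how dropping $\|\ub_{\mathrm{shift}}\|=1$ rescales $s$, are correct and fill in details the paper omits.
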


\begin{lemma}[General operator-norm bounds for index-$2$ unipotents]
\label{lem:sv_bounds}
For any $\Ab$ with $\Ab^2=\mathbf{0}$ and any $s\in\mathbb{R}$,
\[
1-|s|\,\|\Ab\|_2 \;\le\; \sigma_{\min}(\Ib+s\Ab)
\;\le\; \sigma_{\max}(\Ib+s\Ab) \;\le\; 1+|s|\,\|\Ab\|_2 .
\]
These bounds are conservative but dimension-free.
In the canonical rank-$1$ case of Lemma~\ref{lem:sv_pair_exact} with $\|\Ab\|_2=1$, one has the sharper small-$|s|$ behavior
$\sigma_{\max}(\Ib+s\Ab)=1+\tfrac{|s|}{2}+O(s^2)$ and $\sigma_{\min}(\Ib+s\Ab)=1-\tfrac{|s|}{2}+O(s^2)$.
\end{lemma}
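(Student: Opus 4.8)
The plan is to treat the two assertions separately: the dimension-free two-sided bound follows from elementary spectral-norm perturbation inequalities (and in fact does not use the nilpotency $\Ab^2=\mathbf{0}$ at all), while the sharper linear asymptotics are obtained by Taylor-expanding the exact closed form already established in Lemma~\ref{lem:sv_pair_exact}.

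For the general bounds I would argue as follows. The largest singular value is the spectral norm, $\sigma_{\max}(\Ib+s\Ab)=\|\Ib+s\Ab\|_2$, so subadditivity of the operator norm gives $\|\Ib+s\Ab\|_2\le\|\Ib\|_2+|s|\,\|\Ab\|_2=1+|s|\,\|\Ab\|_2$, which is the upper bound. For the lower bound I would use the variational characterization $\sigma_{\min}(\Mb)=\min_{\|\xb\|=1}\|\Mb\xb\|$ together with the reverse triangle inequality: for any unit vector $\xb$, $\|(\Ib+s\Ab)\xb\|\ge\|\xb\|-|s|\,\|\Ab\xb\|\ge(1-|s|\,\|\Ab\|_2)\,\|\xb\|$, where the last step uses $\|\Ab\xb\|\le\|\Ab\|_2$. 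Minimizing over unit $\xb$ yields $\sigma_{\min}(\Ib+s\Ab)\ge 1-|s|\,\|\Ab\|_2$; the estimate is vacuously valid (and consistent with nonnegativity of singular values) whenever the right-hand side is negative. Chaining $\sigma_{\min}\le\sigma_{\max}$ completes the two-sided inequality. I would remark explicitly that these bounds hold for arbitrary $\Ab$, so the nilpotency hypothesis is inessential here and only fixes the setting.

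For the asymptotic refinement I would invoke Lemma~\ref{lem:sv_pair_exact} in the canonical rank-$1$ case with $\|\Ab\|_2=1$, where the only two nontrivial singular values are $\sigma_\pm^2=1+\tfrac{s^2}{2}\pm|s|\sqrt{1+\tfrac{s^2}{4}}$. Since $\sqrt{1+s^2/4}=1+O(s^2)$, I get $\sigma_\pm^2=1\pm|s|+O(s^2)$; taking square roots via $\sqrt{1+u}=1+\tfrac{u}{2}+O(u^2)$ with $u=\pm|s|+O(s^2)$ gives $\sigma_\pm=1\pm\tfrac{|s|}{2}+O(s^2)$, i.e. $\sigma_{\max}(\Ib+s\Ab)=1+\tfrac{|s|}{2}+O(s^2)$ and $\sigma_{\min}(\Ib+s\Ab)=1-\tfrac{|s|}{2}+O(s^2)$.

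Honestly, there is no deep obstacle here: the bounds are standard perturbation estimates and the asymptotics are routine Taylor expansion of an already-derived exact formula. The only points requiring genuine care are (i) applying the reverse triangle inequality uniformly over unit vectors in the lower bound while respecting that singular values are nonnegative, and (ii) the bookkeeping in the expansion, where both the $\tfrac{s^2}{2}$ term in $\sigma_\pm^2$ and the inner square root $\sqrt{1+s^2/4}$ contribute only at order $s^2$, so neither affects the linear coefficient $\pm\tfrac12$.
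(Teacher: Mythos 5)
Your proof is correct and follows essentially the same route as the paper: the triangle inequality for the upper bound, a reverse-triangle-inequality argument for the lower bound, and Taylor expansion of the closed form from Lemma~\ref{lem:sv_pair_exact} for the asymptotics. The only cosmetic difference is that you bound $\sigma_{\min}$ directly via the variational characterization, whereas the paper routes the reverse inequality through the explicit inverse $(\Ib+s\Ab)^{-1}=\Ib-s\Ab$; both are equally valid, and your observation that the two-sided bound does not actually use nilpotency is accurate.
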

\begin{proof}
Use the triangle inequality $\|(\Ib+s\Ab)\xb\|_2\le \|\xb\|_2+|s|\,\|\Ab\|_2\|\xb\|_2$ and its reverse form applied to $(\Ib+s\Ab)^{-1}=\Ib-s\Ab$; see also Weyl inequalities for singular values under rank-1 perturbations.
\end{proof}

\noindent \textbf{Cancellation in the relative logit.}
While $\Hb(s)$ can be anisotropic (Lemma~\ref{lem:sv_pair_exact}), the Additive GRAPE(GRAPE-A) scoring uses a paired inverse-transpose (Eq.~\eqref{eq:add_transform}), which cancels all multiplicative distortions and yields a pure additive term:
\begin{align*}
\widetilde{\qb}_i^\top \widetilde{\kb}_j
&= \widehat{\qb}_i^\top \big(\Ib + i\,\omega\,\Ab\big)^\top \big(\Ib - j\,\omega\,\Ab^\top\big)\,\widehat{\kb}_j
\;=\; \widehat{\qb}_i^\top \big(\Ib - (j{-}i)\,\omega\,\Ab^\top\big)\widehat{\kb}_j
\;=\; \widehat{\qb}_i^\top \Gb_{\mathrm{add}}(j{-}i)^{-{\top}} \widehat{\kb}_j ,
\end{align*}
since $(\Ab^\top)^2=\mathbf{0}$. This reproduces the exact relative law Eq.~\eqref{eq:add_relative_law} and the closed form Eq.~\eqref{eq:add_closed_form} (e.g.\ Eq.~\eqref{eq:add_bias_querykeygated}), independently of $\sigma_\pm(H(s))$.

\noindent \textbf{GRAPE-AP as a path‑integral unipotent.}
Fix a head $h$ and endpoint $t$. The per-row path product in Section~\ref{sec:grape_add_pi} is
\begin{align*}
\prod_{\ell=j+1}^t \big(\Ib - \psi_h(t,\ell)\, \mathbf{E}\big)
\;=\; \Ib - \Big(\sum_{\ell=j+1}^t \psi_h(t,\ell)\Big) \mathbf{E}
\;=\; \Ib - s_h(t,j)\, \mathbf{E},
\end{align*}
because $\mathbf{E}^2=\mathbf{0}$. Thus GRAPE-AP inherits the unipotent spectrum of Prop.~\ref{prop:unipotent_spectrum} with row-dependent $s=s_h(t,j)\le 0$ (since $\psi_h\le 0$ by construction). Its only two nontrivial singular values are exactly~\eqref{eq:sv_pair_closed_form} with $s\mapsto s_h(t,j)$; the rest equal $1$. Consequently,
\begin{align*}
\kappa_2\big(\text{PA factor}\big)
&\,=\, \frac{\sigma_+\big(-s_h(t,j)\big)}{\sigma_-\big(-s_h(t,j)\big)}
\,=\,\sigma_+\big(-s_h(t,j)\big)^2\\
&\,=\, 1+\tfrac{s_h(t,j)^2}{2} + |s_h(t,j)|\sqrt{1+\tfrac{s_h(t,j)^2}{4}}
\,=\, 1+|s_h(t,j)|+O\big(s_h(t,j)^2\big),
\end{align*}
while the determinant remains $1$ and eigenvalues are all $1$. As in Additive GRAPE(GRAPE-A), the paired inverse-transpose used in the bilinear scoring removes any multiplicative anisotropy, leaving the bounded additive term $b_h(t,j)$ in Eq.~\eqref{eq:grape_pa_bias}.

\noindent \textbf{Implications.} Now we summarize the implications of previous results. For all $s$, $\Hb(s)$ is invertible with $\Hb(s)^{-1}=\Ib - s\Ab$; eigenvalues do not grow with offset length (spectral radius $=1$). The operator norm grows at most linearly in $|s|$ (Lemma~\ref{lem:sv_bounds}) and is exactly characterized in the rank-1 canonical cases (Lemma~\ref{lem:sv_pair_exact}).

Secondly, $\det \Hb(s)=1$ implies no net volume change; any expansion along one direction is exactly balanced by contraction along its paired direction (product $\sigma_+\sigma_-=1$).
Despite anisotropy, the GRAPE-A and GRAPE-AP logits remain exactly relative because the key transform uses $\Hb(s)^{-{\top}}$, algebraically eliminating multiplicative distortion and yielding the closed-form additive bias (Eqs.~\eqref{eq:add_relative_law}, \eqref{eq:add_closed_form}, \eqref{eq:grape_pa_bias}).

\subsection{Comparison to PaTH Attention}
\label{subsec:path_householder}
PaTH Attention~\citep{yang2025path} proposes a contextual multiplicative position map given by a cumulative product of identity-plus-rank-one matrices
\[
\Hb_t = \Ib - \beta_t\, \wb_t \wb_t^\top, \qquad \|\wb_t\|_2=1,\quad \beta_t\in(0,2),
\]
applied along the path between key position $j$ and query position $i$ as $\prod_{s=j+1}^{i} \Hb_s$ (see Section 2 of the PaTH paper). In contrast to \textbf{GRAPE-M} factors, each $\Hb_t$ is \emph{not} orthogonal unless $\beta_t\in\{0,2\}$. This has immediate spectral consequences.

\noindent\textbf{Per-step spectrum.}
Since $\Hb_t$ is symmetric rank-1 perturbation of the identity with projector $\Pb_t := \wb_t \wb_t^\top$,
\[
\sigma(\Hb_t)=\{\,1-\beta_t,\,\underbrace{1,\ldots,1}_{d-1}\,\},\qquad
\det(\Hb_t)=1-\beta_t,\qquad
\|\Hb_t\|_2=\max\{1,|1-\beta_t|\}=1.
\]
Thus $\Hb_t$ is norm nonexpansive (operator norm $1$) but \emph{not norm-preserving} unless $\beta_t\in\{0,2\}$. Singular values equal the absolute eigenvalues because $\Hb_t$ is symmetric; the component along $\wb_t$ is scaled by $|1-\beta_t|<1$ for any $\beta_t\in(0,2)\setminus\{0,2\}$, and flips sign when $\beta_t>1$ (a design choice in PaTH to allow negative eigenvalues for state-tracking).

\noindent\textbf{Path product is contractive and near-singular.}
Let $\Pb_{j\to i}=\prod_{s=j+1}^{i} \Hb_s$. Submultiplicativity of singular values gives
\[
\sigma_{\max}(\Pb_{j\to i}) \le \prod_{s=j+1}^{i}\|\Hb_s\|_2 = 1,
\qquad
\sigma_{\min}(\Pb_{j\to i}) \ge \prod_{s=j+1}^{i} \sigma_{\min}(\Hb_s) = \prod_{s=j+1}^{i} |1-\beta_s|.
\]
Hence $\Pb_{j\to i}$ is (at best) nonexpansive, with a worst-case exponential lower bound on the smallest singular value governed by the path-length product of $|1-\beta_s|$. Whenever some $\beta_s$ is close to $1$, $\Hb_s$ is nearly singular (and exactly singular if $\beta_s=1$), driving $\sigma_{\min}(\Pb_{j\to i})$ toward zero. Volume contraction is quantified by
\[
\det(\Pb_{j\to i}) = \prod_{s=j+1}^{i} (1-\beta_s),
\]
which typically decays exponentially in $i-j$ unless $\beta_s$ concentrates at the orthogonal endpoints $\{0,2\}$.

\noindent\textbf{Aligned-plane special case.}
If the directions are time-invariant, $\wb_s\equiv \wb$, then $\Pb_t = \wb \wb^\top$ is an idempotent projector and the factors commute:
\[
\prod_{s=j+1}^{i} \Hb_s
= \prod_{s=j+1}^{i} \big(\Ib - \beta_s \Pb\big)
= \Ib - \Big(1 - \prod_{s=j+1}^{i} (1-\beta_s)\Big) \Pb,
\]
so the eigenvalue along $w$ is exactly $\prod_{s=j+1}^{i}(1-\beta_s)$, making the contraction along $w$ explicit and exponential in path length unless $\beta_s\in\{0,2\}$.

\noindent\textbf{Implications for long-context modeling.}
Because the PaTH transport multiplies the Q/K bilinear by $\Pb_{j\to i}$, any persistent deviation of $\beta_t$ from $\{0,2\}$ yields cumulative energy loss along a moving one-dimensional subspace. This concentrates mass in progressively fewer directions and can flatten or attenuate long-range logits $\qb_i^\top \Pb_{j\to i} \kb_j$ as $i-j$ grows, unless additional renormalizations or forget-gates are introduced. In contrast, \textbf{GRAPE-M} maps lie in $\SO(d)$, so for both non‑contextual and contextual types, all singular values are $1$; volumes and norms are preserved, and Lyapunov exponents are $0$, avoiding contraction‑induced degradation of long‑range interactions.

\begin{lemma}[Orthogonality condition for PaTH factors]
For $\Hb_t = \Ib - \beta_t \wb_t \wb_t^\top$ with $\|\wb_t\|=1$, $\Hb_t$ is orthogonal iff $\beta_t\in\{0,2\}$. For $\beta_t\in(0,2)\setminus\{0,2\}$, $\Hb_t$ is symmetric, diagonalizable with eigenvalues in $(-1,1]\cup\{1\}$, and strictly contractive on $\mathrm{span}\{\wb_t\}$.
\end{lemma}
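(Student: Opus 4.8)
The plan is to reduce everything to the spectral structure of the rank-one orthogonal projector $\Pb_t := \wb_t \wb_t^\top$. First I would record that $\|\wb_t\|=1$ makes $\Pb_t$ symmetric and idempotent, $\Pb_t^\top=\Pb_t$ and $\Pb_t^2=\wb_t(\wb_t^\top\wb_t)\wb_t^\top=\Pb_t$, so that $\Hb_t=\Ib-\beta_t\Pb_t$ is symmetric and in particular equals its own transpose. This single observation feeds both halves of the statement.

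For the orthogonality characterization I would use that, because $\Hb_t$ is symmetric, $\Hb_t$ is orthogonal if and only if $\Hb_t^2=\Ib$. Expanding with $\Pb_t^2=\Pb_t$ gives $\Hb_t^2=\Ib-2\beta_t\Pb_t+\beta_t^2\Pb_t=\Ib+\beta_t(\beta_t-2)\Pb_t$. Since $\Pb_t\neq\mathbf{0}$ (here $\|\wb_t\|=1$ is essential), this equals $\Ib$ exactly when the scalar $\beta_t(\beta_t-2)$ vanishes, i.e.\ $\beta_t\in\{0,2\}$, which yields both directions of the stated equivalence.

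For the eigenvalue statement I would diagonalize $\Pb_t$ directly: it has eigenvalue $1$ on $\mathrm{span}\{\wb_t\}$ and eigenvalue $0$ on the orthogonal complement $\wb_t^\perp$ with multiplicity $d-1$. Hence $\Hb_t=\Ib-\beta_t\Pb_t$ has eigenvalue $1-\beta_t$ along $\wb_t$ and eigenvalue $1$ with multiplicity $d-1$. Because $\Hb_t$ is real symmetric, the spectral theorem supplies an orthonormal eigenbasis, so $\Hb_t$ is orthogonally diagonalizable; for $\beta_t\in(0,2)$ we have $1-\beta_t\in(-1,1)$, placing every eigenvalue in $(-1,1]$. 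Finally, on $\mathrm{span}\{\wb_t\}$ one computes $\Hb_t(c\,\wb_t)=(1-\beta_t)\,c\,\wb_t$, so $\|\Hb_t(c\,\wb_t)\|=|1-\beta_t|\,\|c\,\wb_t\|$ with $|1-\beta_t|<1$, which is exactly strict contraction on that line.

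The argument is elementary, so I do not anticipate a genuine obstacle; the only step demanding care is the "iff", where I must note that $\Pb_t\neq\mathbf{0}$ (guaranteed by $\|\wb_t\|=1$) is precisely what permits cancelling $\Pb_t$ to conclude $\beta_t(\beta_t-2)=0$ rather than merely $\beta_t(\beta_t-2)\Pb_t=\mathbf{0}$. Everything else is a direct consequence of the idempotency $\Pb_t^2=\Pb_t$ and the symmetry of $\Hb_t$.
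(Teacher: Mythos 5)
Your proof is correct and follows essentially the same route as the paper, which establishes these facts in the prose preceding the lemma by viewing $\Hb_t$ as a symmetric rank-one perturbation of the identity with projector $\Pb_t=\wb_t\wb_t^\top$ and reading off the spectrum $\{1-\beta_t,1,\ldots,1\}$. Your explicit handling of the ``iff'' direction via $\Hb_t^2=\Ib+\beta_t(\beta_t-2)\Pb_t$ and the observation that $\Pb_t\neq\mathbf{0}$ is a welcome bit of extra care that the paper leaves implicit.
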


\end{document}